\newcommand{\nc}[1]{\newcommand{#1}}
\nc\comm[1]{\textcolor{ForestGreen}{\texttt{/*~#1~*/}}}
\nc{\knds}{\kern-\nulldelimiterspace}
\definecolor{verylightgray}{rgb}{0.8,0.8,0.8}
\definecolor{verylightgray}{rgb}{0.8,0.8,0.8}
\definecolor{brown}{rgb}{0.55, 0.25, 0.0}
\definecolor{forestgreen}{rgb}{0.0, 0.5, 0.0}
\definecolor{blue-violet}{rgb}{0.54, 0.17, 0.89}
\definecolor{dartmouthgreen}{rgb}{0.05, 0.5, 0.0}
\definecolor{dark-red}{rgb}{0.8, 0.0, 0.0}
\definecolor{light-blue}{rgb}{0.5, 0.5, 0.99}
\definecolor{brinkpink}{rgb}{0.85, 0.25, 0.5}
\definecolor{columbiablue}{rgb}{0.61, 0.87, 1.0}
\definecolor{cyan(process)}{rgb}{0.0, 0.55, 0.85}
\definecolor{darkcyan}{rgb}{0.0, 0.0, 0.8}
\definecolor{darkorange}{rgb}{0.85, 0.45, 0.0}
\definecolor{deeplilac}{rgb}{0.6, 0.33, 0.73}
\definecolor{electricultramarine}{rgb}{0.25, 0.0, 1.0}
\definecolor{electricviolet}{rgb}{0.56, 0.0, 1.0}
\newif\ifhidecomments
    \newcommand{\AM}[1]{}
    \newcommand{\MJH}[1]{}
    \newcommand{\SP}[1]{}
    \newcommand{\JR}[1]{}
    \newcommand{\AR}[1]{}
    \newcommand{\SSS}[1]{}
    \newcommand{\MT}[1]{}
    \newcommand{\FV}[1]{}
    \newcommand{\AM}[1]{\textcolor{cyan(process)}{{\bf AM:}{\small \texttt{~#1~}}}}
    \newcommand{\MJH}[1]{\textcolor{darkcyan}{{\bf MJH:}{\small \texttt{~#1~}}}}
    \newcommand{\SP}[1]{\textcolor{dark-red}{{\bf SP:}{\small \texttt{~#1~}}}}
    \newcommand{\JR}[1]{\textcolor{forestgreen}{{\bf JR:}{\small \texttt{~#1~}}}}
    \newcommand{\AR}[1]{\textcolor{brinkpink}{{\bf AR:}{\small \texttt{~#1~}}}}
    \newcommand{\SSS}[1]{\textcolor{darkorange}{{\bf SS:}{\small \texttt{~#1~}}}}
    \newcommand{\MT}[1]{\textcolor{blue-violet}{{\bf MT:}{\small \texttt{~#1~}}}}
    \newcommand{\FV}[1]{\textcolor{brown}{{\bf FV:}{\small \texttt{~#1~}}}}
\theoremstyle{plain}
\newtheorem{theorem}{Theorem}[section]
\newtheorem{proposition}[theorem]{Proposition}
\newtheorem{lemma}[theorem]{Lemma}
\newtheorem{corollary}[theorem]{Corollary}
\theoremstyle{definition}
\theoremstyle{remark}
\nc{\bbR}{\mathbb{R}}
\nc{\bbN}{\mathbb{N}}
\nc{\R}{\bbR}
\nc{\bs}[1]{\boldsymbol{#1}}
\nc{\scs}{[N]}
\nc{\sab}[1]{\square}
\nc{\emp}[1]{\emph{#1}}
\nc{\sss}{\mathcal{S}}
\nc{\ose}{\textsc{CBA}}
\nc{\indi}[1]{\llbracket#1\rrbracket}
\nc{\la}{\leftarrow}
\nc{\sts}{\mathcal{Q}}
\nc{\obe}{\textsc{CBAB}}
\nc{\expfour}{\textsc{Exp4}}
\nc{\rplus}{\mathbb{R}_+}
\nc{\balls}{\mathcal{U}}
\nc{\ball}[2]{\mathcal{B}_{#1,#2}}
\nc{\sbs}[3]{b_{#1,#2,#3}}
\nc{\bal}{\mathcal{B}}
\nc{\sds}[2]{\mathcal{D}_{#1,#2}}
\nc{\anc}[1]{{\Uparrow}(#1)}
\nc{\des}[1]{{\Downarrow}(#1)}
\nc{\sff}[1]{\mathcal{F}_{x,a}(#1)}
\nc{\lc}[1]{\triangleleft(#1)}
\nc{\rc}[1]{\triangleright(#1)}
\nc{\qdr}{\mathcal{Z}}
\nc{\ipr}[2]{\langle#1,#2\rangle}
\nc{\xval}[1]{\mathcal{V}_{#1}}
\nc{\prw}{\tilde{w}}
\nc{\egr}{\hat{\ell}}
\nc{\hr}{\hat{r}} %hat reward
\nc{\exg}{{g}}
\nc{\tff}{{f}}
\nc{\expt}[1]{\mathbb{E}[#1]}
\nc{\rtu}{\rho}
\nc{\rtf}{\rtu_{x_t,a}(u)}
\nc{\com}{\mathcal{C}}
\nc{\Ksq}{[K_\square]}
\nc{\scO}{\mathcal{O}}
\nc{\uw}[1]{u_{#1}}
\nc{\sexp}{\textsc{SpecialistExp}}
\nc{\cvs}{\mathcal{C}}
\nc{\ty}{\tilde{y}}
\nc{\mf}{\mathcal{E}}
\nc{\hy}{h}
\nc{\uc}{p}
\nc{\q}{q}
\nc{\sset}{\mathcal{S}}
\nc{\con}{\mathcal{X}}
\nc{\sas}{\mathcal{A}}
\nc{\sa}{\bs{s}}
\nc{\sac}[1]{s_{#1}}
\nc{\pol}{\bs{\pi}}
\nc{\lcp}[1]{\bs{\pi}^{#1}}
\nc{\vlc}{\mathcal{V}(\sset)}
\nc{\pot}[1]{\bs{\pi}_{#1}}
\nc{\seq}[2]{\langle#1\,|\,#2\rangle}
\nc{\ex}[1]{\bs{e}^{#1}}
\nc{\ext}[2]{\bs{e}^{#1}_{#2}}
\nc{\lcm}[1]{\bs{\pi}(#1)}
\nc{\lcc}[2]{\bs{\pi}_{#2}(#1)}
\nc{\val}{\mathcal{V}}
\nc{\bsc}[3]{e^{#1}_{#2,#3}}
\nc{\cnf}[2]{c_{#1,#2}}
\nc{\cnb}[1]{\bs{c}_{#1}}
\nc{\onorm}[1]{\|#1\|_1}
\nc{\snw}[1]{\tilde{\bs{w}}_{#1}}
\nc{\snc}[2]{\tilde{w}_{#1,#2}}
\nc{\lm}{\lambda}
\nc{\sat}[1]{\bs{s}_{#1}}
\nc{\satc}[2]{s_{#1,#2}}
\nc{\hrv}[1]{\hat{\bs{r}}_{#1}}
\nc{\hrc}[2]{\hat{r}_{#1,#2}}
\nc{\epd}[2]{\bs{p}^{#1}_{#2}}
\nc{\draw}{\sim}
\DeclareMathOperator*{\argmin}{arg\,min}
\DeclareMathOperator{\conv}{conv}
\title{Bandits with Abstention under Expert Advice}
\author{
Stephen Pasteris\textsuperscript{1}$^*$\quad Alberto Rumi\textsuperscript{2,3}\quad Maximilian Thiessen\textsuperscript{4}\quad Shota Saito\textsuperscript{5} \\
%\textbf{Atsushi Miyauchi}\textsuperscript{5}\quad \textbf{Favio Vitale}\textsuperscript{5} \quad \textbf{Mark Herbster}\textsuperscript{4}\\
\textbf{Atsushi Miyauchi\textsuperscript{3}\quad Fabio Vitale\textsuperscript{3} \quad Mark Herbster\textsuperscript{5}}\\
\textsuperscript{1}The Alan Turing Institute\quad\textsuperscript{2}University of Milan\quad 
\textsuperscript{3}CENTAI Institute\\
\textsuperscript{4}TU Wien\quad
\textsuperscript{5}University College London\\
\texttt{$^*$spasteris@turing.ac.uk}
}
\begin{document}\maketitle

\begin{abstract}
We study the classic problem of prediction with expert advice under bandit feedback. 
Our model assumes that one action, corresponding to the learner's abstention from play, has no reward or loss on every trial. 
We propose the \emph{confidence-rated bandits with abstentions} (\ose) %(\underline{\textbf{C}}onfidence-rated \underline{\textbf{B}}andits with \underline{\textbf{A}}bstentions) 
algorithm, which exploits this assumption to obtain reward bounds that can significantly improve those of the classical \expfour\ algorithm. 
Our problem can be construed as the aggregation of confidence-rated predictors, with the learner having the option to abstain from play.
%Importantly, 
We are the first to achieve bounds on the expected cumulative reward for general confidence-rated predictors. In the special case of specialists we achieve a novel reward bound, significantly improving the previous bounds of \textsc{SpecialistExp} (treating abstention as another action).
% As an example application, we discuss learning unions of balls in a finite metric space. 
% In this contextual setting, we devise an efficient implementation of \ose, reducing the runtime from quadratic to almost linear in the number of contexts.
% Preliminary experiments show that \ose\ improves over existing bandit algorithms.
We discuss how \ose\ can be applied to the problem of adversarial contextual bandits with the option of abstaining from selecting any action. We are able to leverage a wide range of inductive biases, outperforming previous approaches both theoretically and in preliminary experimental analysis. Additionally, we achieve a reduction in runtime from quadratic to almost linear in the number of contexts for the specific case of metric space contexts. 
%%"redicted readers to arxiv" sentences
%Since this manuscript is a workshop paper, we may update the content in future. 
%See the updated version at \url{https://arxiv.org/abs/2402.14585}.

%%% old abstract
%This work studies the classic problem of prediction with expert advice under bandit feedback. Our model assumes that one action, which can be viewed as equivalent to an abstention of Learner from play, has no reward or loss on any trial. We propose the \ose\ (\underline{\textbf{C}}onfidence-rated \underline{\textbf{B}}andits with \underline{\textbf{A}}bstentions) algorithm, which exploits this assumption to obtain reward bounds that can significantly improve on those of the classical \expfour\ algorithm. Our problem can also be viewed as that of the aggregation of confidence-rated predictors when Learner has the option of abstention from play. Importantly, unlike previous algorithms for confidence-rated predictors, \ose\ achieves bounds on the expected cumulative reward. We give an example problem, that of learning balls in the finite space endowed with a distance function, which satisfies the special case when the confidence-rated predictors are specialists. In this setting, we show that our regret bound can dramatically outperform that of \textsc{SpecialistExp} (treating abstention as another action), and we give subroutines to speed up \ose. We conduct preliminary experiments to confirm that our algorithm improves over existing bandit algorithms.

\end{abstract}

\section{Introduction}\label{s:intro}
%Prior research has examined the possibility of abstaining in online settings (\cite{cortes2018online}, \cite{neu2020fast}) and batch (\cite{chow1957optimum}, \cite{chow1970optimum}). While in our work abstinence is neither a reward nor a loss action, earlier works concentrate their attention on the costs related to abstinence.

%increasing a user’s trust in the system

\SSS{@everyone TODO: add acknowledgment, someone we need to thank (Nicolo?) and funding sources}

\SSS{@stephen TODO: Oh no - I wrote the problem as if the adversary need not be fully oblivious - but it does in fact need to be. I know it’s incredibly minor and is easily fixed but I feel bad - should have known}

\SSS{@stephen? TODO: Incorporating the new prop and proofs}

We study the classic problem of prediction with expert advice under bandit feedback. The problem is structured as a sequence of trials. During each trial, each expert recommends a probability distribution over the set of possible actions. The learner then selects an action and observes and incurs the (potentially negative) reward associated with that action on that particular trial. In practical applications, errors often lead to severe consequences, and consistently making predictions is neither safe nor economically practical. For this reason, the abstention option has gained a lot of interest in the literature, both in the batch and online setting \citep{chow1957optimum,chow1970optimum,hendrickx2021machine, cortes2018online}.
%\SSS{Although we have existing research lines on rejection/abstention, it'd be nicer if we have a motivating example if we have space? Because the reviewer 4 in ICML attacked the motivation, though it's easy to rebut. Do @stephen want to add cybersecurity example here?}
Similarly to previous works, this paper is based on the assumption that one of the actions always has zero reward: such an action is equivalent to an abstention of the learner from play. Besides the rewards being bounded, we make no additional assumptions regarding how the rewards or expert predictions are generated. In this paper, we present an efficient algorithm \ose\ (\underline{\textbf{C}}onfidence-rated \underline{\textbf{B}}andits with \underline{\textbf{A}}bstentions) which exploits the abstention action to get reward bounds that can be dramatically higher than those of \expfour\ \citep{auer2002nonstochastic}. In the worst case, our reward bound essentially matches that of \expfour\ so that \ose\ can be seen as a strict improvement, since the time-complexities of the two algorithms are, up to a factor logarithmic in the time horizon, identical in the general case.

Our problem can also be seen as that of aggregating \emp{confidence-rated predictors}~\citep{blum2007external,gaillard2014second,luo2015achieving} when the learner has the option of abstaining from taking actions.
When the problem is phrased in this way, at the start of each trial, each predictor recommends a probability distribution over the actions (which now may not include an action with zero reward) but with a confidence rating. A low confidence rating can mean that either the predictor thinks that all actions are bad (so that the learner should abstain) or simply does not know which action is the best. Previous works on confidence-rated experts measure the performance of their algorithm in terms of the sum of \emp{scaled} per-trial rewards. In contrast to previous algorithms, our approach allows for the derivation of bounds on the expected cumulative reward of \ose. 

This formulation enables us to extend our work to the problem of adversarial contextual bandits with the abstention option, which has not been studied before. Previous work has considered the abstention option in the standard (context-free) adversarial bandit setting or in stochastic settings \citep{cortes2018online,cortes2020online,neu2020fast}, but not in the contextual and adversarial case. Moreover, their results and methods cannot be applied to confidence-rated predictors.
To get more intuition on this setup, we can think of any deterministic policy that maps contexts into actions. Any such policy can be viewed as a classifier, with \emph{foreground} classes associated with each action and a \emph{background} class associated with abstaining. Our learning bias is represented by a set of information we refer to as the \emph{basis}, which we formally define later. It encodes contextual structural assumptions that hold exclusively for the foreground classes and are provided to the algorithm a priori. A particular type of basis is generated by a set of potential clusters that can overlap. Alternatively, a basis can also be created using balls generated by any kind of distance function, which groups contexts believed to be close together. For this latter family of basis, we can also achieve a significant speedup in the per-trial time complexity of \ose. This result is very different (and incomparable) to other results about adversarial bandits in metric spaces \citep{Pasteris2023NearestNW, Pasteris2023AHN}.

\subsection{Additional related work}

The non-stochastic multi-armed bandit problem, initially introduced by  \citet{auer2002nonstochastic}, has been a subject of significant research interest. \citet{auer2002nonstochastic} also considered the multi-armed bandit problem with expert advice, introducing the \expfour\ algorithm. \expfour\ evolved the field of multi-armed bandits to encompass more complex scenarios, particularly the contextual bandit~\citep{lattimore2020bandit}. Contextual bandits are an extension of the classical multi-armed bandit framework, where an agent makes a sequence of decisions while taking into account contextual information. 
% Line of research that works on this linear 
%One common approach to solving contextual bandit problems is to make an assumption about the linearity of the reward function, as studied by algorithms like \textsc{LinUCB} from \citet{li2010contextual} for the stochastic setting and \textsc{LinExp3} from \citet{neu2020efficient} for the non-stochastic setting. \MT{could also mention the kernel variants of contextual UCB, which are in some-sense ``non-linear'' extensions, e.g. https://arxiv.org/pdf/1309.6869.pdf and https://proceedings.mlr.press/v151/zenati22a/zenati22a.pdf} In our research, we depart from the traditional linear framework, considering contextual bandits where the contexts are arbitrary structures and without relying on the linearity assumption for the reward function.
Our work is also related to the multi-class classification with bandit feedback, called \textit{weak reinforcement}~\citep{auer1999structural}.
%by \cite{auer1999structural}, under the name of \textit{weak reinforcement}. 
An action in our bandit setting corresponds to a class in the multi-class classification framework.

As discussed in the introduction, a key aspect of this work is the option to abstain from making any decision. 
In the batch setting~\citep{chow1957optimum,chow1970optimum}, this option is usually referred to as ``rejection''. 
These works study whether to use or reject a specific model prediction based on specific requests (see~\citet{hendrickx2021machine} for a survey). 
%This line of research translated into the online learning world with the possibility of abstention by the learner. 
In online learning, ``rejection'' can be the possibility of abstention by the learner. 
These works usually rely on a cost associated with the abstention action.
~\citet{neu2020fast} studied the magnitude of the cost associated with abstention in an expert setting with bounded losses. 
They state that if the cost is lower than half of the amplitude of the interval of the loss, it is possible to derive bounds that are independent of the time. 
In \citet{cortes2018online}, a non-contextual and partial information setting with the option of abstention is studied. 
The sequel model~\citep{cortes2020online} regards this model as a special case of their stochastic feedback graph model. 
\citet{schreuder2021classification} studied the fairness setting when using the option of abstaining as it may lead to discriminatory predictions.

One specific scenario where prior algorithms can establish cumulative reward bounds is as follows: 
on any given trial, the predictors are \emph{specialists}~\citep{freund1997using}, having either full confidence (a.k.a. \emph{awake}) or no confidence (a.k.a. \emph{asleep}).
The \textsc{SpecialistExp} algorithm by~\citet{herbster2021gang}, a bandit version of the standard specialist algorithm, achieves regret bounds with respect to any subset of specialists where exactly one specialist is awake on each trial.  
We differ from this work as abstention is an algorithmic choice. %, introducing a variant of the specialist framework. 
Instead of sleeping in the rounds where the specialist is not active, the specialist will vote for abstention, which is a proper action of our algorithm. 
In Section~\ref{sec:efficientlearning}, we present an illustrative problem involving learning balls in a space equipped with a metric. 
This example demonstrates our capability to significantly improve on \textsc{SpecialistExp}. 
For this problem, we also present subroutines that significantly speed up \ose.

\section{Problem formulation and notation}
We consider the classic problem of prediction with expert advice under bandit feedback. In this problem we have $K+1$ \emp{actions}, $E$ \emp{experts}, and $T$ \emp{trials}. On each trial $t$:
% \vspace*{-9pt}
\begin{enumerate}
\setlength{\itemsep}{0pt}
    \item Each expert suggests, to the learner, a probability distribution over the $K+1$ actions.
    \item The learner selects an action $a_t$.
    \item The reward incurred by action $a_t$ on trial $t$ (which is in $[-1,1]$) is revealed to the learner.
\end{enumerate}
% \vspace*{-9pt}
We note that the experts' suggestions and the rewards (associated with each action) are chosen a-priori and hence do not depend on the learner's actions. The aim of the learner is to maximize the cumulative reward obtained by its selected actions. As discussed in Section~\ref{s:intro}, we consider the case in which there is an action (the abstention action) that incurs zero reward on every trial.%corresponding to the abstention action.

We denote our action set by $[K]\cup\{\square\}$ where $\square$ is the abstention action. For each trial $t\in[T]$ we define the vector $\bs{r}_t\in[-1,1]^K$ such that for all $a\in[K]$\,, $r_{t,a}$ is the reward obtained by action $a$ on trial $t$. Moreover, we define $r_{t,\square}:=0$ which is the reward of the abstention action $\square$. 

It will be useful for us to represent probability distributions over the actions by vectors in the set:
\begin{equation*}
\sas:=\{\sa\in[0,1]^K\,|\,\onorm{\sa}\leq1\}\,.
\end{equation*}
Any vector $\sa\in\sas$ represents the probability distribution over actions which assigns, for all $a\in[K]$, a probability of $\sac{a}$ to action $a$, and assigns a probability of $1-\onorm{\sa}$ to the abstention action $\square$, where $\onorm{\sa}$ denotes 1-norm of $\sa$. We write $a\draw \sa$ to represent that action $a$ is drawn from the probability distribution $\sa$. We will refer to the elements of the set $\sas$ as \emp{stochastic actions}.

\SSS{Do we have better notation of this policy, since this is confusing for transpose or $T$-th power? $\sas_{T}$? $\sas^{(T)}$? SP: I think we should leave it as is as anything different would be even more confusing imo.}

A \emp{policy} is any element of $\sas^T$ (noting that any such policy is a matrix in $[0,1]^{T\times K}$). Any policy $\bs{e}\in\sas^T$ defines a stochastic sequence of actions: on every trial $t\in[T]$ an action $a\in[K]\cup\{\square\}$ being drawn as $a\draw \bs{e}_t$. Note that if the learner plays according to a policy $\bs{e}\in\sas^T$, then on each trial $t$ it obtains an expected reward of $\bs{r}_t\cdot\bs{e}_t$, where the operator $\cdot$ denotes the dot product.
Note that each expert is equivalent to a policy. 
Thus, for all $i\in[E]$ we denote the $i$-th expert by $\ex{i}\in\sas^T$.
%Thus, we let for all $i\in[E]$ we denote the $i$-th expert by $\ex{i}\in\sas^T$.
Hence, at the start of each trial $t\in[T]$, the learner views the sequence $\seq{\ext{i}{t}}{i\in[E]}$.

We can also view the experts as \emp{confidence-rated predictors} over the set $[K]$: for each $i\in[E]$ and $t\in[T]$, the vector $\ext{i}{t}$ can be viewed as suggesting the probability distribution $\ext{i}{t}/\onorm{\ext{i}{t}}$ over $[K]$, but with confidence $\onorm{\ext{i}{t}}$. We denote this confidence by $\cnf{t}{i}:=\onorm{\ext{i}{t}}$ and write $\cnb{t}:=(\cnf{t}{1},\dots,\cnf{t}{E})$.

In this work, we will refer to the \emp{unnormalized relative entropy} defined by:
\begin{equation*}\Delta(\bs{u},\bs{v}):=\sum_{i\in[E]}u_i\ln\left(\frac{u_i}{v_i}\right)-\|\bs{u}\|_1+\|\bs{v}\|_1
\end{equation*}
for any $\bs{u},\bs{v}\in\mathbb{R}_+^E$. We will also use the Iverson bracket notation $\indi{\mathrm{\textsc{Pred}}}$ as the indicator function, meaning that it is equal to $1$ if $\textsc{Pred}$ is true, and $0$ otherwise. All the proofs are in the Appendix.

%%%new command here
\nc{\trf}{\rho}
%%%%

\section{Main result}

Our main result is represented by a bound on the cumulative reward of our algorithm \ose. %We first 
We note that any \emp{weight} vector $\bs{u}\in\mathbb{R}_+^E$ induces a matrix $\lcm{\bs{u}}\in\mathbb{R}_+^{T\times K}$ defined by
\begin{equation*}
    %$
    \lcm{\bs{u}}:=\sum_{i\in[E]}u_i\ex{i},
    %$
\end{equation*}
which is the linear combination of the experts with coefficients given by $\bs{u}$. 
However, only some of such linear combinations generate valid policies. Thus, we define 
\begin{equation*}
    %$
    \val:=\{\bs{u}\in\mathbb{R}_+^E\,|\,\lcm{\bs{u}}\in\sas^T\}
    %$
\end{equation*}
as the set of all weight vectors that generate valid policies. Particularly, note that $\bs{u}\in\val$ if and only if, on every trial $t$, the weighted sum of the confidences $\bs{u}\cdot\cnb{t}$ is no greater than one.
%However, only some such linear combinations generate valid policies, and hence we define $\val:=\{\bs{u}\in\mathbb{R}_+^E\,|\,\lcm{\bs{u}}\in\sas^T\}$ as the set of all weight vectors that generate valid policies. In particular, note that $\bs{u}\in\val$ if and only if on every trial $t$ the weighted sum of the confidences $\bs{u}\cdot\cnb{t}$ is no greater than one. 
Given some $\bs{u}\in\val$, we define 
\begin{equation*}
    %$
    \trf(\bs{u}):=\sum_{t\in[T]}\bs{r}_t\cdot\lcc{\bs{u}}{t}\,,
    %$,
\end{equation*} 
which would be the expected cumulative reward of the learner if it was to follow the policy $\lcm{\bs{u}}$.
We point out that the learner does not know $\val$ or the function $\bs{\pi}$ a-priori.

The following theorem (proved in \Cref{sec:analysis}) allows us to bound the regret of \ose\ with respect to any valid linear combination $\bs{u}$ of experts.
%We now present the regret bound:

\begin{theorem}\label{cbath}
\ose\ takes parameters $\eta\in(0,1)$ and $\bs{w}_1\in\mathbb{R}_+^E$\,.
    For any $\bs{u}\in\val$ the expected cumulative reward of \ose\ is bounded below by:
    \begin{equation*}
    \sum_{t\in[T]}\mathbb{E}[r_{t,a_t}]\geq\expt{\trf(\bs{u})}- \frac{\Delta(\bs{u},\bs{w}_1)}{\eta}-\eta (12K+2)T\,,
\end{equation*}
where the expectations are with respect to the randomization of \ose's strategy.
The per-trial time complexity of \ose\ is in $\mathcal{O}(KE)$.
\end{theorem}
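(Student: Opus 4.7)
My plan is a standard exponential-weights analysis with potential $\Phi_t:=\Delta(\bs{u},\bs{w}_t)$, adapted to the confidence-rated abstention setting. I would assume that \ose\ maintains weights $\bs{w}_t\in\rplus^E$ (initialised at $\bs{w}_1$), projects them to $\snw{t}\in\val$ when $\bs{w}_t\cdot\cnb{t}>1$ (by scaling), builds a stochastic action $\sat{t}\in\sas$ from the $t$-th row of $\lcm{\snw{t}}$ inflated by an $\Omega(\eta)$ per-action floor on each $a\in[K]$ (with the slack absorbed into $\square$), samples $a_t\draw\sat{t}$, observes $r_{t,a_t}$, forms the importance-weighted estimator $\hrc{t}{i}:=(\ext{i}{t})_{a_t}\,r_{t,a_t}/\satc{t}{a_t}$ (set to $0$ when $a_t=\square$), and updates $w_{t+1,i}:=w_{t,i}\exp(\eta\,\hrc{t}{i})$. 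A direct computation gives $\mathbb{E}[\hrc{t}{i}\mid\mathcal{F}_{t-1}]=\bs{r}_t\cdot\ext{i}{t}$, and the per-action floor ensures $|\eta\,\hrc{t}{i}|\leq 1$ so that the quadratic Taylor bound below applies.

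From the multiplicative update and the definition of $\Delta$,
\begin{equation*}
\Phi_{t+1}-\Phi_t \;=\; -\eta\,\bs{u}\cdot\hrv{t}\;+\sum_{i\in[E]}w_{t,i}\bigl(e^{\eta\,\hrc{t}{i}}-1\bigr)\;\leq\;-\eta\,(\bs{u}-\bs{w}_t)\cdot\hrv{t}\;+\;\eta^{2}\sum_{i\in[E]}w_{t,i}\,\hrc{t}{i}^{\,2},
\end{equation*}
using $e^{x}\leq 1+x+x^{2}$ for $|x|\leq 1$. Taking conditional expectation, the linear term equals $\eta\,\bs{r}_t\cdot(\sat{t}-\lcc{\bs{u}}{t})$ up to an $\mathcal{O}(\eta K)$ bias arising from (a) the per-action floor separating $\sat{t}$ from the raw row $(\lcm{\snw{t}})_t$, and (b) the projection step separating $\snw{t}$ from $\bs{w}_t$; the quadratic term reduces via the standard Exp4 calculation $\mathbb{E}_{a_t}\bigl[\sum_i w_{t,i}\,\hrc{t}{i}^{\,2}\bigr]\leq\sum_{a\in[K]}r_{t,a}^{2}(\lcm{\bs{w}_t})_{t,a}/\satc{t}{a}\leq K$ (using $(\ext{i}{t})_{a}\leq 1$ and $r_{t,a}^{2}\leq 1$). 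Collecting all the constants produced by these two contributions gives the stated $12K+2$.

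Telescoping $\Phi_t$ over $t\in[T]$, using $\Phi_{T+1}\geq 0$ and $\Phi_1=\Delta(\bs{u},\bs{w}_1)$, and identifying $\sum_{t}\bs{r}_t\cdot\lcc{\bs{u}}{t}=\trf(\bs{u})$ together with $\mathbb{E}[r_{t,a_t}\mid\mathcal{F}_{t-1}]=\bs{r}_t\cdot\sat{t}$, yields exactly the claimed reward lower bound after rearrangement. The $\mathcal{O}(KE)$ per-trial complexity is immediate: each trial reads $E$ recommendations of length $K$, accumulates them into $\sat{t}$, samples once, and performs one scalar exponential update per expert. The main obstacle is the coupled design of the projection $\snw{t}$, the per-action floor in $\sat{t}$, and the estimator so that simultaneously (i) $\sat{t}\in\sas$ always, (ii) $|\hrc{t}{i}|\leq 1$ always, and (iii) the combined linear-term bias and quadratic-term variance fit inside $\eta(12K+2)T$. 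This balance is exactly where the abstention action earns its keep: confidence mass that ordinary \expfour\ would have to pull from real actions (at a regret cost) can instead be released into $\square$ at zero reward-cost, since $r_{t,\square}=0$.
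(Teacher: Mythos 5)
Your overall architecture — the unnormalized relative entropy $\Delta(\bs{u},\bs{w}_t)$ as potential, an importance-weighted unbiased estimator, a quadratic bound on the exponential, and telescoping — is the same skeleton as the paper's mirror-descent derivation, and the variance calculation and the final rearrangement match. However, there is a genuine gap in how you handle the projection onto $\vat{t}$. First, the paper's projection is the Bregman projection under $\Delta$, realized as $\snc{t}{i}=w_{t,i}\exp(-\lm\cnf{t}{i})$ with $\lm$ found by interval bisection; it is not a uniform scaling, and for a scaling the generalized Pythagorean inequality $\Delta(\bs{u},\snw{t})\leq\Delta(\bs{u},\bs{w}_t)$ generally fails. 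Second, and more importantly, you account for the discrepancy between $\bs{w}_t$ and $\snw{t}$ as an ``$\mathcal{O}(\eta K)$ bias'' in the linear term. That cannot work: the weights are unnormalized and can grow without bound, so $\onorm{\lcc{\bs{w}_t}{t}}$ may greatly exceed $1$ and the gap $\bs{r}_t\cdot(\lcc{\bs{w}_t}{t}-\lcc{\snw{t}}{t})$ is not controlled by $\eta K$. The correct treatment (Lemma A.1 in the paper) writes the entire one-step inequality in terms of $\snw{t}$ — both the played distribution and the multiplicative update $w_{t+1,i}=\snc{t}{i}\exp(\eta\gc{t}{i})$ start from $\snw{t}$ — and pays for the projection only through the free inequality $\Delta(\bs{u},\bs{w}_t)\geq\Delta(\bs{u},\snw{t})$, which holds because $\bs{u}\in\val\subseteq\vat{t}$ and $\snw{t}$ is the Bregman projection. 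The same substitution of $\snw{t}$ for $\bs{w}_t$ is also needed in your variance step, where you use $(\lcm{\bs{w}_t})_{t,a}\leq\satc{t}{a}$.

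A secondary, non-fatal deviation: you impose an $\Omega(\eta)$ per-action exploration floor so that $|\eta\hrc{t}{i}|\leq1$ and Taylor's bound applies. The paper's algorithm has no floor; it instead uses the shifted estimator $\hrc{t}{a}=1-\indi{a=a_t}(1-r_{t,a_t})/\satc{t}{a_t}$, which is bounded above by $1$ (though unbounded below), and then applies the one-sided inequality $x(1-e^x)\geq-2x^2$ valid for all $x\leq1$. Your floor-based route is a legitimate Exp3-style alternative and would yield a bound of the same order, but it proves a statement about a different algorithm and would not reproduce the constant $12K+2$, which in the paper arises as twice the variance bound $6K+1$.
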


We now compare our bound to those of previous algorithms. 
\SSS{when we say ``firstly'', we may have the second one. what is that? AR: Addressed (is next paragraph)}
Firstly, \expfour\ can only achieve bounds relative to a $\bs{u}\in\val$ with $\onorm{\bs{u}}=1$\,, in which case it essentially matches our bound but with $12K+2$ replaced by $8K+8$. Hence, for any $\bs{u}\in\val$ the \expfour\ bound essentially replaces the term $\trf(\bs{u})$ in our bound by $\trf(\bs{u})/\onorm{\bs{u}}$. Note that $\onorm{\bs{u}}$ could be as high as the number of experts which implies we can dramatically outperform \expfour\footnote{Precisely, if for each expert there exists a trial in which the confidence is 1, then we have $0 \leq \|u\|_{1} \leq E$. Otherwise can be high as $0 \le \|u\|_1 \leq E/c^*$, where $c^* = \max_{t \in [T]} c_t^i$.}.

Secondly, when viewing our experts as confidence-rated predictors, we note that previous algorithms for this setting only give bounds on a weighted sum of the per-trial rewards where the weight on each trial is $\bs{u}\cdot\cnb{t}$ for some $\bs{u}\in\val$. This is only a cumulative reward bound when $\bs{u}\cdot\cnb{t}=1$ for all $t\in[T]$, and finding such a $\bs{u}$ is typically impossible. When there does exist $\bs{u}$ that satisfies this constraint, the reward relative to $\bs{u}$ is essentially the same as for us \citep{blum2007external}. However, there will often be another value of $\bs{u}$ that will give us a much better bound, as we show in Section \ref{sec:efficientlearning}.

% If we consider the special case of confidences equal to 1, it becomes evident that any lower bound for \expfour\ implies a lower bound for our scenario. For instance, following the construction of \cite{seldin2016lower}, we can prove the existence of a reward assignment distribution such that:
% \begin{equation}\label{eq:lowerbound}
%     \inf_{A} \sup_{e} \left( \max_{i \in [E]}\sum_{t \in [T]}\mathbb{E}_{a\sim e_i}[r_{t,a}] -
%     \sum_{t \in [T]} \mathbb{E}[r_{t,A_t}] \right)\in
%     \Omega\left(\sqrt{KT\frac{\ln(E)}{\ln(K)}}\right) \,,
% \end{equation}
% where $\inf$ is an infimum over all possible forecasting strategies and $\sup$ is a supremum over all possible expert advice sequences. 
% % \begin{proposition}    
% %     Assume that $E = K^M$ for an integer $M$ and that $T$ is a multiple of $M$. Assume that $T/M > K/(4 \ln(4/3))$. Then there exists a distribution for reward assignments, such that:
% %     \[
% %     \inf_{A} \sup_{e} \left( \max_{i \in [E]}\sum_{t \in [T]}\mathbb{E}[r_{t,e_i}] -
% %     \sum_{t \in [T]} \mathbb{E}[r_{t,A_t}] \right)\geq
% %     c\sqrt{KT\frac{\ln(E)}{\ln(K)}}
% %     \] 
% %     where \emph{inf} is an infimum over all possible forecasting strategies, \emph{sup} is a supremum over all possible expert advice sequences, the expectations are with respect to both the random generation of rewards and the internal randomization of the forecaster, and $c \geq 0.13$.
% % \end{proposition}

\section{The \ose\ algorithm}\label{sec:cbaalg}

\nc{\pqu}{\mathcal{P}}
\nc{\vat}[1]{\mathcal{V}_{#1}}
\nc{\rf}{\rho_t}
\nc{\gv}[1]{\bs{g}_{#1}}
\nc{\gc}[2]{g_{#1,#2}}

The \ose\ algorithm is given in Algorithm 1. In this section, we describe its derivation via a modification of the classic \emp{mirror descent} algorithm.

\begin{algorithm}[t]
\caption{\ose$(\bs{w}_1, \eta)$}
\label{alg:OSE4}
\textbf{For} $t=1,2,\ldots, T$ \textbf{do}:
%\vspace*{-8pt}
\begin{enumerate}
\setlength{\itemsep}{0pt}
\item For all $i\in[E]$ receive $\ext{i}{t}$
\item For all $i\in[E]$ set $\cnf{t}{i}\la\onorm{\ext{i}{t}}$
\item \textbf{If} $\onorm{\cnb{t}}\leq 1$ \textbf{then}:
\begin{enumerate}
\setlength{\parskip}{0pt}
\setlength{\itemsep}{0pt}
    \item Set $\snw{t}\la\bs{w}_t$
\end{enumerate}
\item \textbf{Else}:
\begin{enumerate}
\setlength{\itemsep}{0pt}
\item By interval bisection find $\lm>0$ such that:
\begin{equation*}
    \sum_{i\in[E]}\cnf{t}{i}{w}_{t,i}\exp(-\lm\cnf{t}{i})=1
\end{equation*}
\item For all $i\in[E]$ set $\snc{t}{i}\la w_{t,i}\exp(-\lm \cnf{t}{i})$
\end{enumerate}
\item Set: 
\begin{equation*}
    \sat{t}\la\sum_{i\in[E]}\snc{t}{i}\ext{i}{t}
\end{equation*}
\item Draw $a_t\draw\sat{t}$
\item Receive $r_{t,a_t}$
\item For all $a\in[K]$ set: 
\begin{equation*}
    \hrc{t}{a}\la 1-\indi{a=a_t}(1-r_{t,a_t})/\satc{t}{a_t}
\end{equation*}
\item For all $i\in[E]$ set $w_{(t+1),i}\la\snc{t}{i}\exp(\eta \ext{i}{t}\cdot\hrv{t})$
\end{enumerate}
\end{algorithm}

Our modification of mirror descent is based on the following mathematical objects. For all $t\in[T]$ we first define:
\begin{equation*}
\vat{t}:=\{\bs{v}\in\mathbb{R}_+^E\,|\,\onorm{\lcc{\bs{v}}{t}}\leq 1\}\,,
\end{equation*}
which is the set of all weight vectors that give rise to linear combinations producing valid stochastic actions at trial $t$. 
%Given some $t\in[T]$ we define our \emp{objective function} $\rf:\vat{t}\rightarrow[-1,1]$ such that for all $\bs{v}\in\vat{t}$ we have $\rf(\bs{v}):=\bs{r}_t\cdot\lcm{\bs{v}}$.
Given some $t\in[T]$, we define our \emp{objective function} $\rf:\vat{t}\rightarrow[-1,1]$ as
\begin{equation*}
     %$
     \rf(\bs{v}):=\bs{r}_t\cdot\lcm{\bs{v}} \text{ for all } \bs{v}\in\vat{t}.
     %$
\end{equation*}
%such that $\rf(\bs{v}):=\bs{r}_t\cdot\lcm{\bs{v}}$ 

Like mirror descent, \ose\ maintains, on each trial $t\in[T]$, a weight vector $\bs{w}_t\in\mathbb{R}_+^E$. However, unlike mirror descent on the simplex, we do not keep $\bs{w}_t$ normalized, but we will instead project it into $\vat{t}$ at the start of trial $t$, producing a vector $\snw{t}$. Also, unlike mirror descent, \ose\ does not use the actual gradient (which it does not know) of $\rf$ at $\snw{t}$, but (inspired by the \textsc{Exp3} algorithm) uses an unbiased estimator instead. Specifically, on each trial $t\in[T]$\,, \ose\ does the following:
%\vspace*{-9pt}
\begin{enumerate}
\setlength{\itemsep}{0pt}
    \item Set $\snw{t}\la\operatorname{argmin}_{\bs{v}\in\vat{t}}\Delta(\bs{v},\bs{w}_t)$. 
    \item Randomly construct a vector $\gv{t}\in\mathbb{R}^E$ such that $\mathbb{E}[\gv{t}]=\nabla\rf(\snw{t})$. 
    \item Set $ \bs{w}_{t+1}\la\operatorname{argmin}_{\bs{v}\in\mathbb{R}_+^E}(\eta\gv{t}\cdot(\snw{t}-\bs{v})+\Delta(\bs{v},\snw{t}))$. 
\end{enumerate}
%\vspace*{-9pt}
This naturally raises two questions: how is $a_t$ selected and how is $\gv{t}$ constructed? On each trial $t\in[T]$ we define
\begin{equation*}
%$
\sat{t}:=\sum_{i\in[E]}\snc{t}{i}\ext{i}{t}\,,
%$    
\end{equation*}
which is the stochastic action generated by the linear combination $\snw{t}$, and select $a_t\draw\sat{t}$. Note that:
\begin{equation}\label{algseceq1}
\mathbb{E}[r_{t,a_t}]=\rf(\snw{t})\,,
\end{equation}
which confirms that $\rf$ is our objective function at trial $t$. Once $r_{t,a_t}$ is revealed to us we can proceed to construct the gradient estimator $\gv{t}$. It is important that we construct this estimator in a specific way. Inspired by \expfour\ we first define a reward estimator $\hrv{t}$ such that for all $a\in[K]$ we have:
\begin{equation*}
    \hrc{t}{a}:= 1-\indi{a=a_t}(1-r_{t,a_t})/\satc{t}{a_t}\,.
\end{equation*}
This reward estimate is unbiased as:
\begin{equation*}
\mathbb{E}[\hrc{t}{a}]=1-\Pr[a=a_t](1-r_{t,a})/\satc{t}{a}=r_{t,a}\,.
\end{equation*}
We then define, for all $i\in[E]$, the component:
\begin{equation*}
%$
\gc{t}{i}:=\ext{i}{t}\cdot\hrv{t}\,.
%$
\end{equation*}
Note that for all $i\in[E]$ we have:
\begin{equation*}
\mathbb{E}[\gc{t}{i}]=\ext{i}{t}\cdot\mathbb{E}[\hrv{t}]=\ext{i}{t}\cdot\bs{r}_t=\partial_i\rf(\snw{t})
\end{equation*}
so that $\mathbb{E}[\gv{t}]=\nabla\rf(\snw{t})$ as required.

Now that we defined the process by which \ose\ operates we must show how to compute $\snw{t}$ and $\bs{w}_{t+1}$. First we show how to compute $\snw{t}$ from $\bs{w}_t$. If $\onorm{\cnb{t}}\leq 1$ it holds that $\bs{w}_t\in\vat{t}$ so we immediately have $\snw{t}=\bs{w}_t$. Otherwise we must find $\snw{t}\in\mathbb{R}_+^E$ that minimizes $\Delta(\snw{t},\bs{w}_t)$ subject to the constraint: 
\begin{equation*}
    \sum_{i\in[E]}\snc{t}{i}\cnf{t}{i}=1\,,
\end{equation*}
which is equivalent to the constraint that $\onorm{\lcm{\snw{t}}}=1$.
Hence, by Lagrange's theorem there exists $\lm$ such that:
\begin{equation*}
\nabla_{\snw{t}}\biggl(\Delta(\snw{t},\bs{w}_t)+\lm\sum_{i\in[E]}\snc{t}{i}\cnf{t}{i}\bigg)=0
\end{equation*}
which is solved by setting, for all $i\in[E]$\,: 
\begin{equation*}
    \snc{t}{i}:= w_{t,i}\exp(-\lm \cnf{t}{i})\,.
\end{equation*}
The constraint is then satisfied if $\lm$ is such that:
\begin{equation*}
    \sum_{i\in[E]}\cnf{t}{i}{w}_{t,i}\exp(-\lm\cnf{t}{i})=1\,.
\end{equation*}
Since this function is monotonic decreasing, $\lm$ can be found by interval bisection. For this computation step, we treat our numerical precision as a constant in our time complexity. In Appendix \ref{apx:precision}, we show that, even if the numerical precision is unbounded, we incur a time complexity equal to that of \textsc{Exp4}, up to a factor logarithmic in $T$, adding only 1 to the regret. \AR{Changed this and pointed to appendix}

Turning to the computation of $\bs{w}_{t+1}$\,, since it is unconstrained it is found by the equation:
\begin{equation*}
\nabla_{\bs{w}_{t+1}}(\gv{t}\cdot\bs{w}_{t+1}+\eta^{-1}\Delta(\bs{w}_{t+1},\snw{t}))=0\,.
\end{equation*}
which is solved by setting, for all $i\in[E]$\,: 
\begin{equation}\label{algseceq2}
w_{(t+1),i}:=\snc{t}{i}\exp(\eta\gc{t}{i})\,.
\end{equation}

\nc{\bll}{B}
\nc{\bra}{\delta}
\nc{\cra}[1]{b_{#1}}
\nc{\cons}{\mathcal{X}}
\nc{\basis}{\mathcal{B}}

\section{Adversarial contextual bandits with abstention}

One main application of \ose\ is in the problem of adversarial contextual bandits with a finite context set. In this problem, we have a finite set of \emp{contexts} $\cons$. A-priori nature selects a sequence:
\begin{equation*}
\seq{(x_t,\bs{r}_t)\in\cons\times[-1,1]^K}{t\in[T]}\,,
\end{equation*}
but does not reveal it to the learner. For all $t\in[T]$ we define $r_{t,\square}:=0$. On each trial $t\in[T]$ the following happens:
\begin{enumerate}
\setlength{\itemsep}{0pt}
\item The context $x_t$ is revealed to the learner.
\item The learner selects an action $a_t\in[K]\cup\{\square\}$.
\item The learner sees and incurs reward $r_{t,a_t}\in[-1,1]$.
\end{enumerate}

We will assume that we are given, a-priori, a set $\basis\subseteq2^\cons$ that we call the \emph{basis}. We call each element of $\basis$ a \emph{basis element} (which is a set of contexts). We will later introduce various potential bases, determined by the nature of the context's structure: points within a metric space, nodes within a graph, and beyond. Importantly, our method is capable of accommodating any type of basis and, thus, any potential inductive bias that might be present in the data.

Given our basis we run our algorithm \ose\ with each expert corresponding to a pair $(\bll,k)\in\basis\times[K]$. The expert corresponding to each pair $(\bll,k)$ will deterministically choose action $k$ when the current context $x_t$ is in $\bll$, and abstain otherwise.

\begin{corollary}\label{basisthm}
    Given any basis $\basis$ of cardinality $N$ and any $M\in\mathbb{N}$ we can implement \ose\ such that for any sequence of disjoint basis elements $\seq{\bll_j}{j\in[M]}$ with corresponding actions $\seq{\cra{j}\in[K]}{j\in[M]}$ we have:
    \begin{align*}
\sum_{t\in[T]}\expt{r_{t,a_t}}
\geq\sum_{t\in[T]}\sum_{j\in[M]}\indi{x_t\in\bll_j}r_{t,\cra{j}}-\sqrt{2M\ln(N)(6K+1)T}\,.
    \end{align*}
    The per-trial time complexity of this implementation of \ose\ is in $\mathcal{O}(KN)$.
\end{corollary}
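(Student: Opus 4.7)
The plan is to apply \Cref{cbath} with a competitor weight vector $\bs{u}$ that encodes the reference policy $\seq{(\bll_j,\cra{j})}{j\in[M]}$, then to sharpen the generic $\mathcal{O}(KE)$ runtime by exploiting the sparsity of the basis experts. First, I would define $\bs{u}\in\mathbb{R}_+^{KN}$ by $u_{(\bll_j,\cra{j})}:=1$ for each $j\in[M]$ and zero elsewhere. Because the $\bll_j$ are pairwise disjoint, at every trial $t$ at most one of the indicators $\indi{x_t\in\bll_j}$ is nonzero, so $\bs{u}\cdot\cnb{t}\leq 1$ and hence $\bs{u}\in\val$. Likewise, $\lcc{\bs{u}}{t}$ is either the zero vector or the unit vector on the coordinate $\cra{j^*}$ for the unique $j^*$ (if any) with $x_t\in\bll_{j^*}$, so
\begin{equation*}
\trf(\bs{u})=\sum_{t\in[T]}\sum_{j\in[M]}\indi{x_t\in\bll_j}r_{t,\cra{j}},
\end{equation*}
matching the leading term of the target bound.

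Next, I would choose the unnormalized prior $w_{1,(\bll,k)}=1/N$, so that $\Delta(\bs{u},\bs{w}_1)=M\ln(N)-M+K$ has leading term $M\ln N$ without an unwanted $\ln K$ factor. Plugging these into \Cref{cbath} and balancing with $\eta\propto\sqrt{M\ln(N)/((12K+2)T)}$ yields the stated $\sqrt{2M\ln(N)(6K+1)T}$ bound after the standard square-root optimization, with any leftover $K-M$ term absorbed into the lower-order error.

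For the time complexity, the generic $\mathcal{O}(KE)=\mathcal{O}(K^2N)$ of \Cref{cbath} is loose here because each expert $\ext{(\bll,k)}{t}$ is supported on at most the single coordinate $k$. Tracing through \Cref{alg:OSE4}: forming $\sat{t}$ in step~5 becomes an $\mathcal{O}(E)=\mathcal{O}(KN)$ accumulation of $E$ singletons into a $K$-vector; the inner products $\ext{i}{t}\cdot\hrv{t}$ in step~10 collapse to $\mathcal{O}(1)$ per expert; and each bisection function evaluation in step~4a is also $\mathcal{O}(E)$. The total per-trial cost is therefore $\mathcal{O}(KN)$. The main subtlety I anticipate is obtaining $\ln(N)$ rather than $\ln(KN)$: the unnormalized prior just described is precisely what prevents the spurious $\ln K$ factor from entering via the relative entropy.
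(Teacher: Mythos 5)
Your proposal follows the same route as the paper's proof: the same $NK$ experts, the same comparator $\bs{u}\in\val$ (with disjointness giving $\bs{u}\cdot\cnb{t}\leq 1$ and $\trf(\bs{u})$ equal to the comparator's reward), an application of \Cref{cbath}, and a tuning of $\bs{w}_1$ and $\eta$; your sparsity argument for the $\mathcal{O}(KN)$ per-trial runtime is correct and in fact more detailed than the paper's one-line assertion. The one substantive flaw is your prior. With $w_{1,i}=1/N$ you get $\Delta(\bs{u},\bs{w}_1)=M\ln N - M + K$, and your claim that the leftover $K-M$ is ``absorbed into the lower-order error'' fails whenever $K\gg M\ln N$: for $M=1$ and large $K$ the optimized regret becomes of order $\sqrt{(\ln N + K)KT}=\Theta(K\sqrt{T})$ rather than the claimed $\Theta(\sqrt{K\ln(N)T})$, so the stated bound does not follow from your choice. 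The paper instead sets $w_{1,i}=M/(NK)$, which equalizes $\|\bs{w}_1\|_1=\|\bs{u}\|_1=M$ so the linear terms in the unnormalized relative entropy cancel exactly, leaving $\Delta(\bs{u},\bs{w}_1)=M\ln(NK/M)$; this is the cleaner choice and is what you should adopt (note that your instinct to avoid a $\ln K$ factor is what led you astray — the issue is the mismatch of 1-norms, not the logarithm).
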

\begin{proof}
The choice of experts for \ose\ that leads to Corollary \ref{basisthm} is defined by the set of pairs so that $E=NK$ and for each $\bll\in\basis$ and action $a\in[K]$ there exists an unique $i\in[E]$ such that for all $t\in[T]$ and $b\in[K]$ we have:
\begin{equation*}
    e^i_{t,b}:=\indi{x_t\in\bll}\indi{b=a}\,.
\end{equation*}
By choosing $w_{1,i}:=M/NK$ for all $i\in[E]$\,, and choosing
\begin{equation*}
    %\eta:=\sqrt{\frac{M\ln(N)}{(6K+1)T}}\,.
    \eta:=(M\ln(N)/(6K+1)T)^{-1/2}\,,
\end{equation*}
Theorem \ref{cbath} implies the reward bound in Corollary \ref{basisthm}.
The per-trial time complexity of a direct implementation of \ose\ for this set of experts would be $\mathcal{O}(KN)$.
\end{proof}

We briefly comment on the term:
\begin{equation*}
    \sum_{j\in[M]}\indi{x_t\in\bll_j}r_{t,\cra{j}}\,,
\end{equation*}
that appears in the theorem statement. If $x_t$ does not belong to any of the sets in $\seq{\bll_j}{j\in[M]}$ then this term is equal to zero (which is the reward of abstaining). Otherwise, since the sets are disjoint, $x_t$ belongs to exactly one of them and the term is equal to the reward induced by the action that corresponds to that set. In other words, the total cumulative reward is bounded relative to that of the policy that abstains whenever $x_t$ is outside the union of the sets and otherwise selects the action corresponding to the set that $x_t$ lies in. 

\SSS{We may clarify foreground/background issues in Fig~\ref{fig:example-abs} as pointed out by one reviewer?}

Note the vast improvement of our reward bound over that of \textsc{SpecialistExp} with abstention as one of the actions. Let's assume our context set is a metric space and our basis is the set of all balls. In order to get a reward bound for \textsc{SpecialistExp}, the sets in which the specialists are awake must partition the set $\cons$. This means that we must add to our $M$ balls a disjoint covering (by balls) of the complement of the union of the original $M$ balls. Note that the added balls correspond to the sets in which the specialists predicting the abstention action are awake. Typically this would require a huge number of balls so that the total number of specialists is huge (much larger than $M$); this huge number of specialists essentially replaces the term $M$ in our reward bound (we illustrate an example in \Cref{fig:example-abs}).

Furthermore, in Appendix \ref{apx:overlapping}, we show that the same implementation of \ose\ is capable of learning a weighted set of \textit{overlapping} basis elements, as long as the sum of the weights of the basis elements covering any context is bounded above by one, which \textsc{SpecialistExp} cannot do in general.

As we will see below, the practical bases we propose have a moderate size of $|\basis|=\scO(|\cons|^2)$ leading to a per-step runtime of $\scO(K|\cons|^2)$ for \ose\ in this contextual bandit problem. In \Cref{sec:efficientlearning}, we show how to significantly improve the runtime for a broad family of bases.
% To get more intuition on this setup, we can think of any deterministic policy that maps contexts into actions. Any such policy can be viewed as a classifier, with \emph{foreground} classes associated with each action and a \emph{background} class associated with abstaining. Our learning bias, reflected in the constructed basis, encodes contextual structural assumptions that hold exclusively for the foreground classes. 
% %This is relevant to real-world problems as clusters are often approximately ball-shaped
\definecolor{CBBlue}{HTML}{377eb8}
\definecolor{CBOrange}{HTML}{ff7f00}

\begin{figure}
  \centering
     \subfigure[Two foreground classes and background as abstained.]{
   % \begin{minipage}{0.45\textwidth} % Adjust the width as needed
        \centering
        \begin{tikzpicture}[every node/.style={circle, draw, fill = white, minimum size=1mm}, scale=0.55]
        % Edges
        \foreach \x in {0,1,2} {
            \foreach \y in {0,1,2} {
                \draw (\x,\y) -- (\x+1,\y);
                \draw (\x,\y) -- (\x,\y+1);
            }
        }
        \foreach \x in {3,4,5} {
            \foreach \y in {3,4,5} {
                \draw (\x,\y) -- (\x+1,\y);
                \draw (\x,\y) -- (\x,\y+1);
            }
        }
        \draw (0,3) -- (1,3);
        \draw (1,3) -- (2,3);
        \draw (2,3) -- (3,3);

        \draw (3,3) -- (3,2);
        \draw (3,1) -- (3,2);
        \draw (3,1) -- (3,0);
        
        \draw (6,3) -- (6,4);
        \draw (6,4) -- (6,5);
        \draw (6,5) -- (6,6);
        
        \draw (5,6) -- (6,6);
        \draw (5,6) -- (4,6);
        \draw (3,6) -- (4,6);
        
        % Nodes
        \foreach \x in {0,1,2,3} {
            \foreach \y in {0,1,2,3} {
                \node (\x\y) at (\x,\y) {};
            }
        }
        \foreach \x in {3,4,5,6} {
            \foreach \y in {3,4,5,6} {
                \node (\x\y) at (\x,\y) {};
            }
        }    

        % Green and red rectangles
        \draw[CBBlue, fill=CBBlue,fill opacity=.2, thick, rounded corners=5mm] (-0.5, -0.5) rectangle (2.5,2.5);
        \draw[CBOrange, fill=CBOrange, fill opacity=.2, thick, rounded corners=5mm] (3.5, 3.5) rectangle (6.5,6.5);
    \end{tikzpicture}
        % \caption{Caption for the first picture}
        \label{fig:example1}
    }
    \hspace*{5em}
   %\hfill % Horizontal space between the minipages
    %  \centering
     \subfigure[Two foreground classes and the background as another one.]{
    %\begin{minipage}{0.45\textwidth} % Adjust the width as needed
        \centering
        \begin{tikzpicture}[every node/.style={circle, draw, fill = white, minimum size=1mm}, scale=0.55]
        % Edges
        \foreach \x in {0,1,2} {
            \foreach \y in {0,1,2} {
                \draw (\x,\y) -- (\x+1,\y);
                \draw (\x,\y) -- (\x,\y+1);
            }
        }
        \foreach \x in {3,4,5} {
            \foreach \y in {3,4,5} {
                \draw (\x,\y) -- (\x+1,\y);
                \draw (\x,\y) -- (\x,\y+1);
            }
        }
        \draw (0,3) -- (1,3);
        \draw (1,3) -- (2,3);
        \draw (2,3) -- (3,3);

        \draw (3,3) -- (3,2);
        \draw (3,1) -- (3,2);
        \draw (3,1) -- (3,0);
        
        \draw (6,3) -- (6,4);
        \draw (6,4) -- (6,5);
        \draw (6,5) -- (6,6);
        
        \draw (5,6) -- (6,6);
        \draw (5,6) -- (4,6);
        \draw (3,6) -- (4,6);
        
        % Nodes
        \foreach \x in {0,1,2,3} {
            \foreach \y in {0,1,2,3} {
                \node (\x\y) at (\x,\y) {};
            }
        }
        \foreach \x in {3,4,5,6} {
            \foreach \y in {3,4,5,6} {
                \node (\x\y) at (\x,\y) {};
            }
        }    

        \draw[CBBlue!80, fill=CBBlue, fill opacity=.2, thick, rounded corners=5mm] (-0.5, -0.5) rectangle (2.5,2.5);
        \draw[CBOrange!80, fill=CBOrange, fill opacity=.2, thick, rounded corners=5mm] (3.5, 3.5) rectangle (6.5,6.5);

        \draw[black!30, thick, fill=black!30, fill opacity=.2, rounded corners=2mm] (-0.5,3.4) rectangle (0.4,2.6);
        \draw[black!30, thick, fill=black!30, fill opacity=.2, rounded corners=2mm] (1.4,3.4) rectangle (0.6,2.6);
        \draw[black!30, thick, fill=black!30, fill opacity=.2,  rounded corners=2mm] (1.5,3.4) rectangle (2.4,2.6);
        \draw[black!30, thick, fill=black!30, fill opacity=.2,  rounded corners=2mm] (3.4,3.4) rectangle (2.6,2.6);
        \draw[black!30, thick, fill=black!30, fill opacity=.2,  rounded corners=2mm] (3.5,3.4) rectangle (4.4,2.6);
        \draw[black!30, thick, fill=black!30, fill opacity=.2,  rounded corners=2mm] (4.5,3.4) rectangle (5.4,2.6);
        \draw[black!30, thick, fill=black!30, fill opacity=.2,  rounded corners=2mm] (5.5,3.4) rectangle (6.5,2.6);

        \draw[black!30, thick, fill=black!30, fill opacity=.2,  rounded corners=2mm] (2.6,6.4) rectangle (3.45,5.55);
        \draw[black!30, thick, fill=black!30, fill opacity=.2,  rounded corners=2mm] (2.6,4.6) rectangle (3.45,5.45);
        \draw[black!30, thick, fill=black!30, fill opacity=.2,  rounded corners=2mm] (2.6,4.45) rectangle (3.45,3.6);
        
        % \draw[black!30, thick, rounded corners=2mm] (2.6,3.5) rectangle (3.45,5.4);
        % \draw[black!30, thick, rounded corners=2mm] (2.6,0.5) rectangle (3.45,2.4);
        \draw[black!30, thick, fill=black!30, fill opacity=.2,  rounded corners=2mm] (2.6,1.6) rectangle (3.45,2.45);
        \draw[black!30, thick, fill=black!30, fill opacity=.2,  rounded corners=2mm] (2.6,0.6) rectangle (3.45,1.45);
        \draw[black!30, thick, fill=black!30, fill opacity=.2,  rounded corners=2mm] (2.6,-0.5) rectangle (3.45,0.4);

    \end{tikzpicture}
        % \caption{Caption for the second picture}
        \label{fig:example2}
    }
    \caption{Illustrative example of abstention where we cover the foreground and background classes with metric balls.
    We consider two clusters (blue and orange) as the foreground and one background class (white), using the shortest path $d_\infty$ metric.
    Using abstention, we can cover two clusters with one ball for each and abstain the background with no balls required (Fig.~\ref{fig:example1}).
    In contrast, if we treat the background class as another class, it would require significantly more balls to cover the background class, as seen by the 10 gray balls in Fig.~\ref{fig:example2}.
    If the number of balls to cover significantly increases like in this case, the bound involving the number of balls also gets significantly worse. 
    %   This increase in the number of balls would lead to a significantly worse bound that involves the number of balls.
    }
    % \caption{Example clusters in a graph that require only one metric ball (in the shortest path $d_\infty$ metric) each to be covered. Using abstention we do not require to cover the grey background class. By contrast, a non-abstention based approach would require a large number of balls to cover the background class, leading to significantly worse bounds.}
    \label{fig:example-abs}
\end{figure}
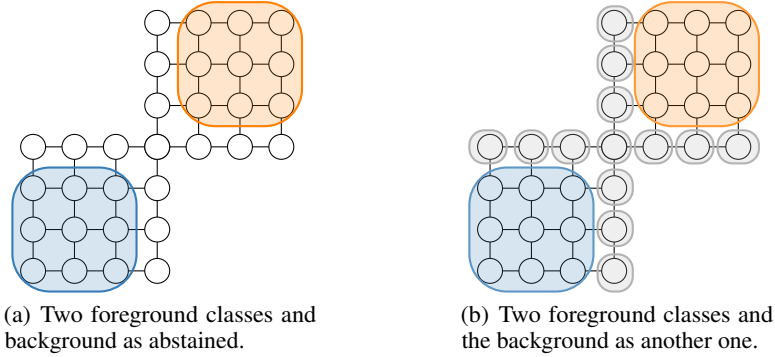
\subsection{A lower bound}
% The construction of the lower bound by \cite{seldin2016lower}, would still hold if the balls at every timestep are \textit{active}, and therefore are issuing the prediction. This kind of construction would lead to 
% \[
%     \inf_{A} \sup_{e} \left( \max_{i \in [E]}\sum_{t \in [T]}\mathbb{E}[r_{t,e_i}] -
%     \sum_{t \in [T]} \mathbb{E}[r_{t,A_t}] \right)\in
%     \Omega\left(\sqrt{KT\frac{\ln(N)}{\ln(K)}}\right) \,,
% \] 
% where $\inf$ is an infimum over all possible forecasting strategies and $\sup$ is a supremum over all possible expert advice sequences.

% \todo{Add abstention cost discussion on the lower bound of Auer et al}

% To construct the lower bound, at each time step we can be presented with a point that falls into one of the $M$ balls, or with a point we want to abstain in. Therefore we can split the time horizon into those timesteps in which we are going to predict as follows:
In this section, we show that \ose\ is, up to an $\mathcal{O}(\ln(|\basis|))$ factor, essentially best possible on this contextual bandit problem:

\begin{proposition}%\MT{Alberto (and Stephen)? please check and fix.} 
Take any learning algorithm. Given any basis $\basis$ and any $M\in\mathbb{N}$, for any sequence of disjoint basis elements $\seq{\bll_j}{j\in[M]}$ there exists a sequence of corresponding actions $\seq{\cra{j}\in[K]}{j\in[M]}$ such that an adversary can force:
\begin{equation*}
    \sum_{t\in [T]}\sum_{j\in[M]} \indi{x_t \in \mathcal{B}_j}r_{t,b_j} -
    \sum_{t\in [T]} \mathbb{E}[r_{t,a_t}]\in \Omega\left(\sqrt{MKT}\right)\,.
\end{equation*}    
    %Let $\basis$ be a basis with at least $M$ disjoint basis sets. Then an adversary can force a regret of $\Omega(\sqrt{MKT})$.
\end{proposition}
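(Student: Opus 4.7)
The plan is to reduce to $M$ independent copies of the classical $K$-armed stochastic bandit lower bound of \citep{auer2002nonstochastic}, each run on $T/M$ trials, and sum.

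First, I would fix disjoint basis elements $\seq{\bll_j}{j\in[M]}$ and pick any $x_j\in\bll_j$ (WLOG $M\mid T$), and have the oblivious adversary cycle through the contexts $x_1,\dots,x_M$ so that each $x_j$ appears on exactly $T/M$ trials. The rewards would be drawn as follows: independently for each $j\in[M]$, sample $\cra{j}$ uniformly from $[K]$; then on every trial $t$ with $x_t=x_j$, set $r_{t,\cra{j}}$ to Bernoulli with mean $1/2+\epsilon$ and $r_{t,a}$ for $a\in[K]\setminus\{\cra{j}\}$ to Bernoulli with mean $1/2$, where $\epsilon:=c\sqrt{MK/T}$ for a small absolute constant $c$. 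The abstention reward is $0$ as required.

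Since the $\bll_j$ are disjoint the context deterministically reveals $j$, and since the rewards across the $M$ sub-problems are independent, the learner's behaviour within each sub-problem cannot be informed by the others. This decouples the analysis into $M$ non-contextual bandits, each with $K$ Bernoulli arms (plus a redundant abstention action of mean $0$) played for $T/M$ rounds. For each sub-problem, I would invoke the classical KL/Pinsker argument of \citep{auer2002nonstochastic} to obtain expected regret at least $\Omega(\epsilon\cdot T/M)=\Omega(\sqrt{KT/M})$ against the unknown optimal arm $\cra{j}$. Summing the $M$ independent contributions gives total expected regret $\Omega(M\sqrt{KT/M})=\Omega(\sqrt{MKT})$ under the randomization of $(\cra{j})_{j\in[M]}$ and the Bernoulli rewards, and Yao's minimax principle then extracts a single deterministic input $\seq{\cra{j}}{j\in[M]}$ and reward realization against which any randomized learner incurs the same expected regret bound.

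The only subtle point is that the abstention action could in principle be used to bypass exploration. However, $\square$ yields deterministic reward $0$, strictly worse than the optimal arm's mean $1/2+\epsilon$, and observing $r_{t,\square}=0$ provides no information about $\cra{j}$; any use of $\square$ therefore only enlarges the per-trial pseudo-regret without relaxing the identification task. Hence the main obstacle reduces to the careful application of the standard bandit lower bound per sub-problem and verifying that the decoupling across disjoint contexts is clean — no new ingredient beyond \citep{auer2002nonstochastic} is required.
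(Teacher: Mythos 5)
Your proposal is correct and follows essentially the same route as the paper's proof: partition the trials evenly among the $M$ disjoint basis elements, run an independent hard $K$-armed bandit instance (in the style of \citet{auer2002nonstochastic}) inside each, note that the known zero-reward abstention arm is uninformative and suboptimal, and sum the $M$ per-instance lower bounds of order $\sqrt{K\,T/M}$ to get $\Omega(\sqrt{MKT})$. Your write-up is in fact somewhat more explicit than the paper's (you spell out the Bernoulli construction, the gap $\epsilon=c\sqrt{MK/T}$, and the averaging/Yao step that fixes the actions $\cra{j}$), but no new idea is involved.
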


\subsection{Efficient learning with balls}\label{sec:efficientlearning}
In practice we can often quantify the similarity between any pair of contexts.
That is, the contexts form a metric space, equipped with a \emp{distance} function $d:\cons\times\cons\rightarrow\mathbb{R}_+$ known to the learner a-priori. For example, contexts could have feature vectors in $\R^m$ (and the metric is the standard Euclidean distance or cosine similarity) or be nodes in a graph with the metric given by the shortest-path distance. A natural basis for this situation is the set of metric \emph{balls}. Specifically, a ball is any set $\bll\subseteq\cons$ in which there exists some $x\in\cons$ and $\bra\in\mathbb{R}_+$ with:
\begin{equation*}
    \bll=\{z\in\cons\,|\,d(x,z)\leq\bra\}\,.
\end{equation*}
For this broad family of bases\footnote{Actually we require a weaker condition. We only use the fact that for each context $z\in\cons$ we have a set $\basis_z = \{B^z_1,\dots, B^z_\ell\}$ of monotonically increasing basis elements, that is, $B^z_i\subseteq B^z_j$ for $i<j$, and the whole basis is formed by the union of these: $\basis = \bigcup_{z\in\cons} \basis_z$.} we can achieve the following speed-up, relying on a a sophisticated data structure based on binary trees.
%proved in Appendix \ref{apx:efficient_imp}:
%\MT{perhaps let's do a corollary/thm above for the general case and then the thm below as a speed-up.}
\begin{theorem}\label{ballth}
    Let $N:=|\cons|$. Given any $M\in\mathbb{N}$ we can implement \ose\ such that for any sequence of disjoint balls $\seq{\bll_j}{j\in[M]}$ with corresponding actions $\seq{\cra{j}\in[K]}{j\in[M]}$ we have:
    \begin{align*}
\sum_{t\in[T]}\expt{r_{t,a_t}}
\geq\sum_{t\in[T]}\sum_{j\in[M]}\indi{x_t\in\bll_j}r_{t,\cra{j}}-\sqrt{4M\ln(N)(6K+1)T}\,.
    \end{align*}
    The per-trial time complexity of this implementation of \ose\ is in $\mathcal{O}(KN\ln(N))$.
\end{theorem}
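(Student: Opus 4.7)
The plan is to split the argument in two parts. The reward bound follows essentially from Corollary~\ref{basisthm}: each metric ball is determined by its (center, radius), and for each center the radius takes at most $N$ distinct values, so the basis of all metric balls has cardinality at most $N^2$. Instantiating Corollary~\ref{basisthm} with $|\basis| \leq N^2$ and using $\ln(N^2) = 2\ln N$ yields exactly $\sqrt{4M\ln(N)(6K+1)T}$.

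The real work is implementing \ose\ in per-trial time $\mathcal{O}(KN \ln N)$ rather than the naive $\mathcal{O}(K|\basis|) = \mathcal{O}(KN^2)$. I would exploit the chain structure highlighted in the footnote: for every center $z \in \cons$, sort the balls centered at $z$, namely $\basis_z = \{B_1^z, \ldots, B_{\ell_z}^z\}$, by radius so that $B_i^z \subseteq B_j^z$ whenever $i < j$. The balls in $\basis_z$ containing a query context $x_t$ form the suffix starting at the smallest index $i^*(z, x_t)$ whose radius is $\geq d(z, x_t)$. A second crucial feature of this expert family is that the confidences $\cnf{t}{(\bll,b)} = \indi{x_t \in \bll}$ are binary; this collapses step 4(a) of Algorithm~\ref{alg:OSE4} to the closed form $\exp(-\lm) = 1/S$ with $S := \sum_{(\bll,b) : x_t \in \bll} w_{t,(\bll,b)}$, eliminating the interval bisection entirely. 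The weight update likewise reduces to multiplying $w_{t,(\bll,b)}$ by $\exp(\eta \hrc{t}{b})/S$ whenever $x_t \in \bll$ and leaving it unchanged otherwise; within any fixed chain $(z,b)$ this is a suffix scalar multiplication whose multiplier depends only on $b$.

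My data structure would maintain, for every pair $(z,b) \in \cons \times [K]$, a balanced binary (segment) tree indexed by $\basis_z$ that stores the weights $w_{t,(B_i^z,b)}$ and supports suffix-sum queries and range scalar multiplication in $\mathcal{O}(\ln N)$ time via standard lazy propagation. At initialization I would precompute, for every $z$, the sorted radii of $\basis_z$ and the distances $\{d(z,x) : x \in \cons\}$; the thresholds $i^*(z,x_t)$ then follow from a single binary search per center per trial. On each trial $t$ the algorithm performs (a) $NK$ suffix-sum queries to compute $S$ and each $S_b := \sum_{\bll : x_t \in \bll} w_{t,(\bll,b)}$, (b) sampling $a_t$ from the distribution $\satc{t}{b} = S_b/S$ over $[K] \cup \{\square\}$, and (c) $NK$ suffix scalar multiplications to apply the update. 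Each phase runs in $\mathcal{O}(KN \ln N)$.

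The main obstacle will be confirming that the aggregate of these suffix updates reproduces exactly the weight trajectory prescribed by Algorithm~\ref{alg:OSE4}: essentially a bookkeeping verification that the binary-confidence simplification and the lazy-propagation operations compose correctly, together with a careful treatment of the degenerate case $\onorm{\cnb{t}} \leq 1$ (where no projection is required, and the update multiplier is simply $\exp(\eta \hrc{t}{b})$). Once that is in place, the reward guarantee follows from Theorem~\ref{cbath} through Corollary~\ref{basisthm} with no further work.
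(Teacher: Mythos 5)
Your proposal is correct and takes essentially the same route as the paper: the reward bound is obtained exactly as you describe by instantiating Corollary~\ref{basisthm} with the basis of at most $N^2$ metric balls (so $\ln(N^2)=2\ln N$), and the paper's implementation is precisely your data structure — for each (center, action) pair, a balanced binary tree over the contexts sorted by distance from the center, whose functions $\phi,\psi$ realize lazy-propagation suffix-sum queries and suffix scalar multiplications in $\mathcal{O}(\ln N)$ time, with the binary confidences collapsing the projection to division by the total active weight rather than interval bisection. The only cosmetic difference is that the paper locates the relevant suffix via the ancestor path of $x_t$ in each tree instead of an explicit binary search over radii.
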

As there are at most $\scO(N^2)$ metric balls, this improves the runtime of the direct \ose\ implementation from $\scO(KN^2)$ to $\scO(KN\ln(N))$, that is almost linear per step. All the details are in \Cref{apx:efficient_imp}.

\begin{figure*}[!t]
    \centering
    \subfigure[Stochastic Block Model]{%
\includegraphics[width=.245\hsize]{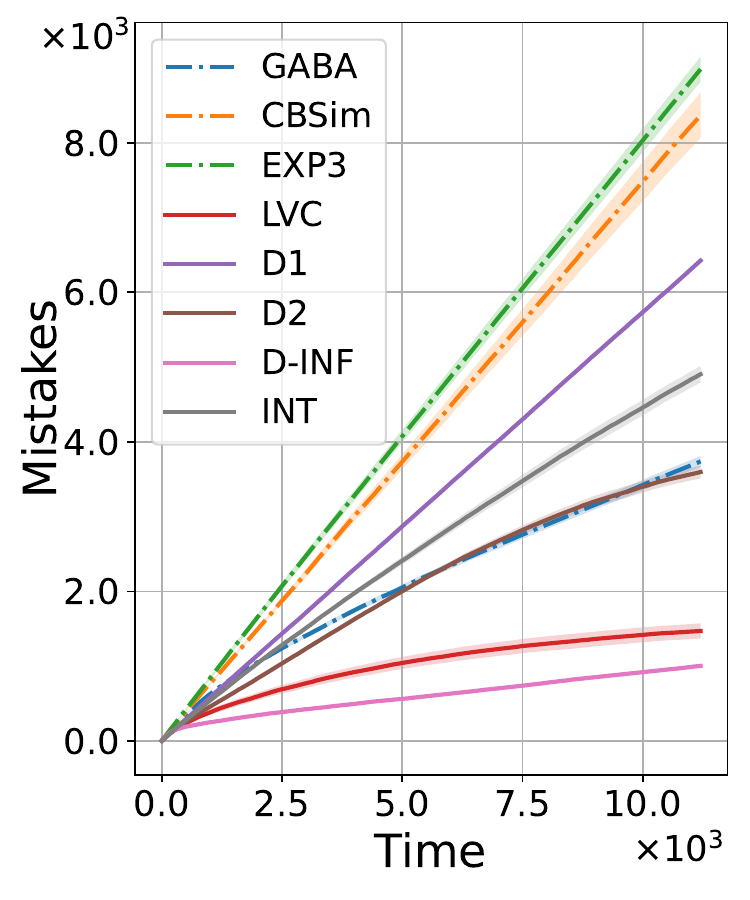}\label{fig:cliquemain}}
\subfigure[Gaussian graph]{%
\includegraphics[width=.245\hsize]{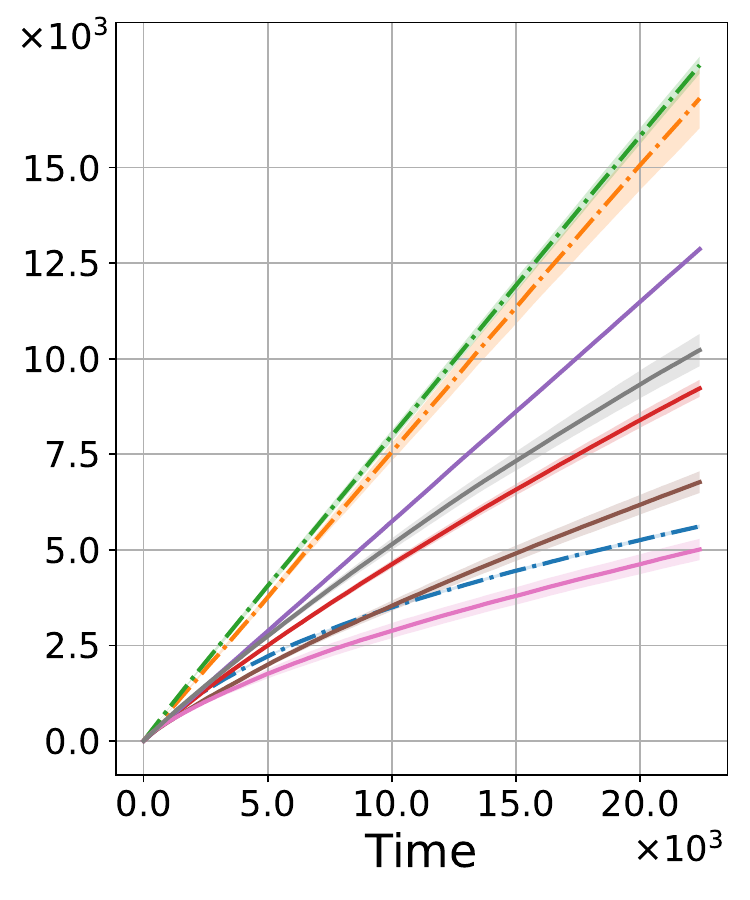}\label{fig:gaussianmain}}
\subfigure[Cora graph]{%
\includegraphics[width=.245\hsize]{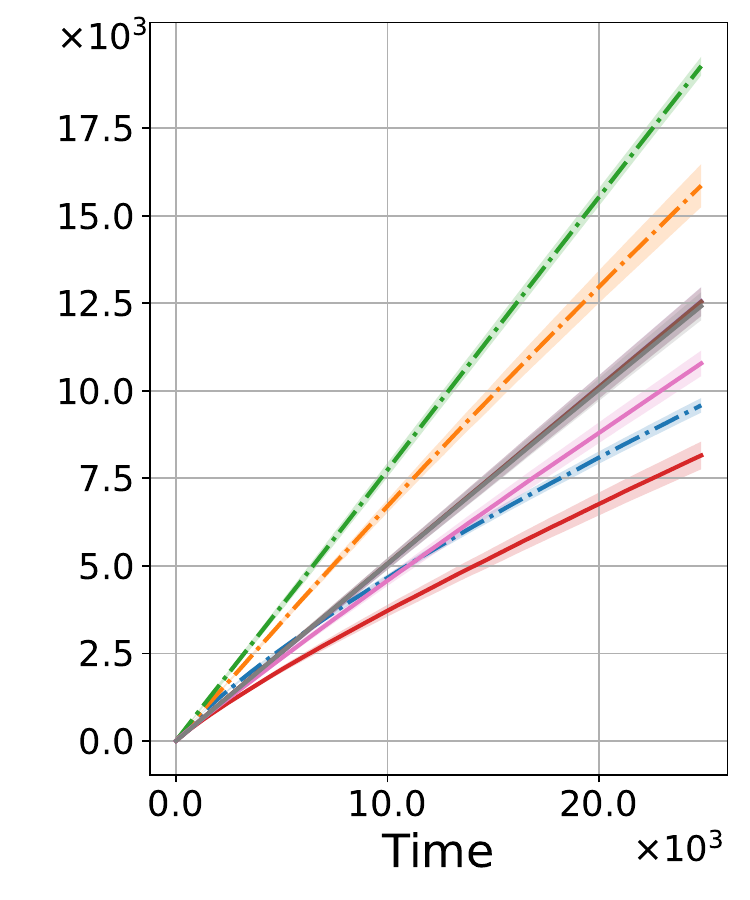}\label{fig:coramain}}
\subfigure[LastFM Asia graph]{%
\includegraphics[width=.245\hsize]{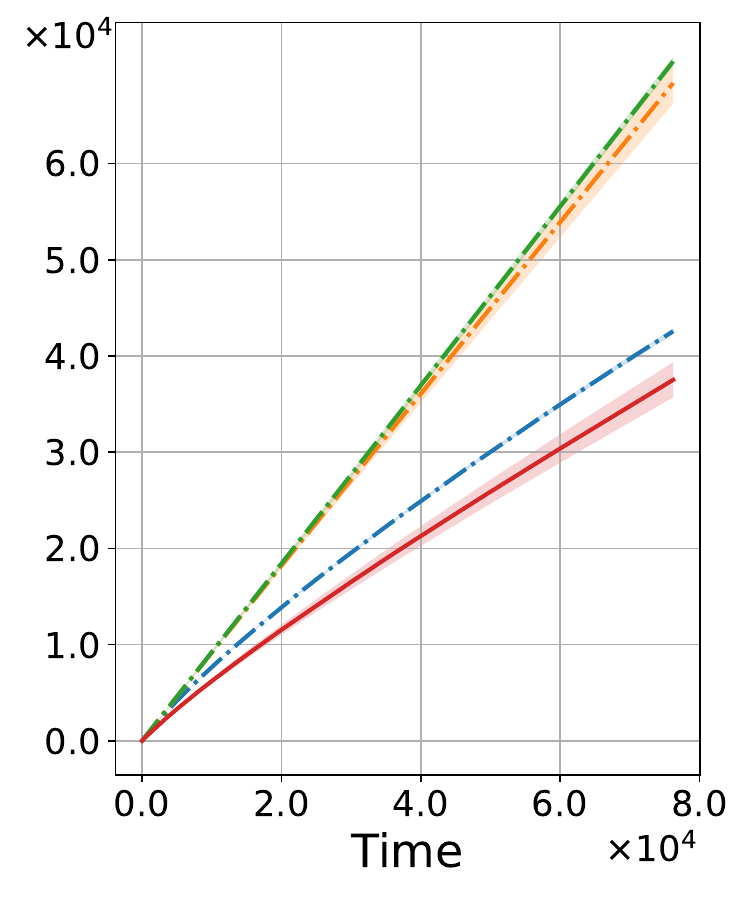}\label{fig:lastfmmain}}
    \caption{Results regarding the number of mistakes over time, the four main settings are presented from left to right: the Stochastic Block Model, Gaussian graph, Cora graph and LastFM Asia graph. In this context, D1, D2, and D-INF represent the $p$-norm bases, LVC represents the community detection basis, and INT represents the interval basis. The baselines, EXP3 for each context, Contextual Bandit with similarity, and GABA-II, are denoted as EXP3, CBSim, and GABA, respectively, and are represented with dashed lines. All the figures display the data with 95\% confidence intervals over 20 runs, calculated using the standard error multiplied by the $z$-score 1.96.}
    %\SSS{I think that the experimental setting can go into the main body, while we can highlight that the dotted lines indicate the baselines while the lines indicate ours.}}
\end{figure*}

\section{Experiments} 
\label{sec:experiments}

\SSS{Added execuse to use graph data.}

%\SSS{Tasks: To sort ``bases.'' To sort defs of bases. To sort foreground/background.}

%We conducted a preliminary evaluation of our proposed method using both synthetically generated and real-world datasets in the context of node classification. To assess the effectiveness of each previously mentioned basis set, we considered graph structures in our experiments.
%\AR{Add line}

This section conducts preliminary experiments, the code is available at GitHub\footnote{\href{https://github.com/albertorumi/ContextualBanditsWithAbstention}{https://github.com/albertorumi/ContextualBanditsWithAbstention}}. 
We evaluate our method to compare existing algorithms using graph data, since it is common to consider graph structures under the confidence-rated expert setting~\citep{cesa2013gang,herbster2021gang}. %to compare our method with existing algorithms.
As mentioned above, the bases used in our algorithm can be constructed arbitrarily, allowing to encompass different inductive biases based on applications. %In many real world contextual bandits problems, contexts are structured as nodes in a graph. 
%This includes applications such as advertising or recommendation systems over a social network. 
Thus, we consider some representative bases used on learning tasks on graphs before, each leading to different inductive priors on the contexts. 
%Each of these metric induce to the learning bias that nodes close (measured in the given metric) to each other tend to belong to the same cluster. More precisely, we assume that each target cluster in the graph is the union of a small number of balls.
We provide a short description of the bases here and refer to \Cref{sec:thedeialsofthebases} for more details.

\textbf{Effective $p$-resistance basis $d_p$}: Balls given by the metric $$
d_p(i,j) := \biggl({\min\limits_{\substack{\bs{u}\in\R^N\\u_{i}-u_{j}=1}} \sum\limits_{s,t\in V}|u_s - u_t|^p}\biggr)^{-1/p}\,.$$ We use $d_1$, $d_2$, and $d_\infty$ \citep{herbster2009predicting}.

\textbf{Louvain method basis (LVC)}: Communities returned by the Louvain method~\citep{blondel2008fast}, processed by the greedy peeling algorithm~\citep{lanciano2023survey}.

\textbf{Geodesic intervals basis (INT)}: All sets of the form $I(x,y):=\{z\in\cons \mid z \text{ is on a shortest } x\text{-}y\text{ path}\}$ for all $x,y\in\cons$ \citep{pelayo2013geodesic, thiessen2021active}.

Let $N$ be the cardinality of $|\cons|$. For all three basis types, we immediately get an $\scO(KN^2)$ runtime per step of \ose\ as there are $\scO(N^2)$ basis elements. Moreover, for $d_p$ balls and the LVC basis we can use the more efficient $\scO(KN\ln N)$ implementation through Theorem~\ref{ballth}. 
We empirically evaluate our approach in the context of online multi-class node classification on a given graph with bandit feedback. At each time step, the algorithm is presented with a node chosen uniformly at random and must either predict an action from the set of possible actions $[K]$ or abstain. The node can accept (resulting in a positive reward) or reject (resulting in a negative reward) the suggestion based on its preferred class with a certain probability. 
%SSS; deleted this sentece awhile
In a real-world application, this models a scenario where each user has a category preference (such as music genre or interest). When the item we decide to present matches their interest, there is a high probability of receiving a reward.

We compare our approach \ose\ using each of these bases on real-world and artificial graphs against the following baselines: an implementation of \textsc{ContextualBandit} from \citet{slivkins2011contextual}, the \textsc{GABA-II} algorithm proposed by \citet{herbster2021gang}, % introduced in \textsc{SpecialistExp}, 
and an \textsc{EXP3} instance for each data point.
We use the following graphs for evaluation.
% We now discuss the configuration of the various graphs we considered to evaluate our method:

%\AM{If the space constraint is tight, what about sending one of Clique graph and Gaussian graph to the Appendix? Then the figure would fit into a single column, and we can get a lot of space. Probably Clique graph should be called Stochastic Block Model (SBM).} 
%\SSS{Agreed that clique graph is a SBM. Here is reference for SBM~\citep{holland1983stochastic}}

\textbf{Stochastic block model.} We use an established synthetic graph, \textit{stochastic block model}~\citep{holland1983stochastic}. 
This graph is generated by spawning an arbitrary number of disjoint cliques representing the foreground classes. Then an arbitrary number of background points are generated and connected to every possible point with a low probability. Figure~\ref{fig:cliquemain} are displayed the results for the case of $F = 160$ nodes for each foreground class and $B = 480$ nodes for the background class. Connecting each node of the background class with a probability of $1/\sqrt{FB}$.

\textbf{Gaussian graph.} The points on this graph are generated in a two-dimensional space using five different Gaussian distributions with zero mean. Four of them are positioned at the corners of the unit square, representing the foreground classes and having a relatively low standard deviation. Meanwhile, the fifth distribution, representing the background class, is centered within the square and is characterized by a larger standard deviation. The points are linked in a $k$-nearest neighbors graph. In Figure \ref{fig:gaussianmain} are displayed the results for 160 nodes for each foreground class and a standard deviation of 0.2, 480 nodes for the background class with a standard deviation of 1.75, along with a 7-nearest neighbors graph.

%\textbf{Real-world dataset.} We used the Cora dataset~\citep{sen2008collective}, which is a standard dataset for node classification tasks. While Cora contains both features and a graph, we exclusively employed the largest connected component of the graph. Subsequently, we randomly chose a subset of three out of the original seven classes to serve as the background class. Additionally, we selected 15\% of the nodes from the foreground classes randomly to represent noise points, and we averaged the results over multiple runs, varying the labels chosen for noise. In Figure \ref{fig:coramain} we averaged over 5 different label sets as noise.

\textbf{Real-world dataset.} We tested our approach on the Cora dataset~\citep{sen2008collective} 
and the LastFM Asia dataset~\citep{snapnets}. 
While both of these graphs contain both features and a graph, we exclusively utilized the largest connected component of each graph, resulting in 2485 nodes and 5069 edges for the Cora graph and 7624 nodes and 27806 edges for the LastFM Asia graph.
%%%%SSS: LastFM dataset may be called as lastfm asia https://pytorch-geometric.readthedocs.io/en/latest/generated/torch_geometric.datasets.LastFMAsia.html as there is another lastfm dataset https://pytorch-geometric.readthedocs.io/en/latest/generated/torch_geometric.datasets.LastFM.html
Subsequently, we randomly chose a subset of three out of the original seven and eighteen classes, respectively, to serve as the background class. Additionally, we selected 15\% of the nodes from the foreground classes randomly to represent noise points, and we averaged the results over multiple runs, varying the labels chosen for noise. Both in Figures \ref{fig:coramain} and \ref{fig:lastfmmain} we averaged over 5 different label sets as noise. For the LastFM Asia graph, we exclusively tested the LVC bases, as it is the most efficient one to compute given the large size of the graph. 

\textbf{Results.}
The results from both synthetically generated tests (Figures \ref{fig:cliquemain} and~\ref{fig:gaussianmain}) demonstrate the superiority of our method when compared to the baselines. 
%The results obtained from conducting experiments on both the Gaussian and clique synthetically generated graphs, highlight the superior performance of our method when the selected basis effectively covers the foreground classes as expected. 
In particular, $d_\infty$-balls delivered exceptional results for both graphs, 
implying that $d_\infty$-balls effectively cover the foreground classes as expected. 
%In particular, the basis sets that consistently delivered exceptional results in each graph were the $d-\infty$ balls and the community detection.  
For the Cora dataset (Figure~\ref{fig:coramain}), we observed that our method outperforms \textsc{GABA-II} only when employing the community detection basis. This similarity in performance is likely attributed to the dataset's inherent lack of noise. Worth noting that the method we employed to inject noise into the dataset may not have been the optimal choice for this specific context. However, it is essential to highlight that our primary focus revolves around the abstention criteria, which plays a central role in ensuring the robustness of our model in the presence of noise. For the LastFM Asia dataset, our objective was to assess the practical feasibility of the model on a larger graph. We tested the LVC bases as they were the most promising and most efficient to compute. We outperform the baselines in our evaluation as shown in Figure \ref{fig:lastfmmain} and further discussed in Appendix~\ref{apx:experiments}.

In summary, our first results confirm what we expected: our approach excels when we choose basis functions that closely match the context's structure. However, it also encounters difficulties when the chosen basis functions are not a good fit for the context.
In Appendix~\ref{apx:experiments}, the results for a wide range of different parameters used to generate the previously described graphs are displayed.

%\section{Concluding Remarks}
%\SSS{Conclusion, Future work, and Limitation?}

\section*{Acknowledgement}

%Do we have someone to add to be thanked?

SP acknowledges the following funding. Research funded by the Defence Science and Technology Laboratory (Dstl) which is an executive agency of the UK Ministry of Defence providing world class expertise and delivering cutting-edge science and technology for the benefit of the nation and allies. The research supports the Autonomous Resilient Cyber Defence (ARCD) project within the Dstl Cyber Defence Enhancement programme.
AR acknowledges the support from the NeurIPS 2024 Financial Assistance. %+ùì
MT acknowledges support from a DOC fellowship of the Austrian academy of sciences (ÖAW).
SS acknowledges the support by Huawei for his Ph.D study at UCL.

%% Mark does not have funding to acknowledge. Confirmed by Shota.

\bibliography{references}
\bibliographystyle{plainnat}

\newpage
\appendix
\onecolumn

\section{\ose\ analysis}\label{sec:analysis}
Here we prove Theorem~\ref{cbath} from the modification of mirror descent (and the specific construction of $\gv{t}$) given in Section \ref{sec:cbaalg}. Whenever we take expectations in this analysis they are over the draw of $a_t$ from $\sat{t}$ for some $t\in[T]$. As for mirror descent, our analysis hinges on the following classic lemma:

\nc{\cs}{\mathcal{C}}
\nc{\bv}{\bs{v}}
\nc{\bz}{\bs{z}}
\nc{\bu}{\bs{u}}
\nc{\pcf}{\xi}
\nc{\rpe}{\mathbb{R}_+^E}
\nc{\bq}{\bs{q}}
\nc{\cst}{\beta}

\begin{lemma}\label{mainlem}
   Given any convex set $\cs\subseteq\rpe$\,, any convex function $\pcf:\rpe\rightarrow\mathbb{R}$\,, any $\bq\in\cs$ and any $\bz\in\rpe$ with:
   \begin{equation*}       \bq=\operatorname{argmin}_{\bv\in\cs}(\pcf(\bv)+\Delta(\bv,\bz))\,,
   \end{equation*}
   then for all $\bu\in\cs$ we have:
   \begin{equation*}  \pcf(\bu)+\Delta(\bu,\bz)\geq\pcf(\bq)+\Delta(\bu,\bq)\,.
   \end{equation*}
\end{lemma}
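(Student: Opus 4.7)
The plan is to recognize $\Delta$ as the Bregman divergence generated by the separable convex potential $\phi(\bv)=\sum_i (v_i\ln v_i - v_i)$ on $\rpe$, and to combine the first-order optimality condition at $\bq$ with the \emph{three-point identity} for $\Delta$. These are the two ingredients that underlie essentially every proof of this flavor of mirror-descent lemma, and together they furnish the conclusion with only a few lines of algebra.

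The first step is to establish the three-point identity: for $\bu,\bq,\bz$ with strictly positive components,
\[
\Delta(\bu,\bz) - \Delta(\bu,\bq) - \Delta(\bq,\bz) = \langle \bu-\bq,\,\ln\bq - \ln\bz\rangle,
\]
which follows by directly expanding the definition of $\Delta$: the $\|\cdot\|_1$ terms cancel, and the $u_i\ln u_i$ and $q_i\ln q_i$ contributions regroup cleanly into the inner product on the right.

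Next, I would invoke first-order optimality. Since $\bq$ minimizes the convex function $\bv\mapsto\pcf(\bv)+\Delta(\bv,\bz)$ over the convex set $\cs$, there exists a subgradient $\bs{g}\in\partial\pcf(\bq)$ such that
\[
\langle \bs{g} + \ln\bq-\ln\bz,\,\bu-\bq\rangle \geq 0 \quad\text{for all }\bu\in\cs,
\]
using $\nabla_\bv\Delta(\bv,\bz)\big|_{\bv=\bq} = \ln\bq-\ln\bz$ componentwise. Combining this with the subgradient inequality $\pcf(\bu)\geq\pcf(\bq)+\langle\bs{g},\bu-\bq\rangle$ yields $\pcf(\bu)-\pcf(\bq)\geq\langle\ln\bz-\ln\bq,\,\bu-\bq\rangle$. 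Substituting the three-point identity on the right-hand side then rearranges to
\[
\pcf(\bu)+\Delta(\bu,\bz)\geq \pcf(\bq)+\Delta(\bu,\bq)+\Delta(\bq,\bz),
\]
and dropping the nonnegative $\Delta(\bq,\bz)$ term (Bregman divergences of convex potentials are nonnegative) delivers the claim.

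The main obstacle I expect is the treatment of boundary coordinates where $q_i=0$ or $z_i=0$, on which $\ln$ is undefined so the gradient computation above is not literally meaningful. The standard workaround is to note that $\Delta(\bv,\bz)$ tends to $+\infty$ as $v_i\to 0^+$ with $z_i>0$, so the minimizer must satisfy $q_i>0$ wherever $z_i>0$; coordinates with $z_i=0$ force $q_i=0$ by convention and can be dropped from the variational problem, reducing the analysis to the strictly positive case where the subgradient argument goes through as stated.
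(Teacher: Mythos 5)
Your proposal is correct, and it is the same argument at heart: the paper disposes of this lemma by observing that $\Delta$ is the Bregman divergence of $\bv\mapsto\sum_i v_i\ln v_i$ and citing Theorem~9.12 of Beck (2017), which is precisely the ``three-point identity plus first-order optimality'' computation you carry out explicitly. So you have in effect reproved the cited theorem for this particular divergence; the identity, the variational inequality, the subgradient step, and the final drop of $\Delta(\bq,\bz)\geq 0$ all check out (and note your potential $\sum_i(v_i\ln v_i - v_i)$ and the paper's $\sum_i v_i\ln v_i$ differ by a linear term, hence generate the same Bregman divergence).

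One small inaccuracy in your boundary discussion: it is not true that $\Delta(\bv,\bz)\to+\infty$ as $v_i\to 0^+$ with $z_i>0$; the term $v_i\ln(v_i/z_i)$ tends to $0$, so $\Delta$ stays finite on the boundary. The correct reason the minimizer avoids the boundary (whenever $\cs$ permits $v_i>0$) is that the potential is essentially smooth: $\partial_{v_i}\Delta(\bv,\bz)=\ln(v_i/z_i)\to-\infty$ as $v_i\to 0^+$, so the objective strictly decreases by moving off the face $v_i=0$ (using that $\pcf$ has bounded subgradients there, which holds in the paper's applications where $\pcf$ is affine). The conclusion you draw is right; only the stated justification needs this repair.
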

\begin{proof}
Theorem~9.12 in \citet{Beck2017first}\, shows that the theorem holds if $\Delta$ is Bregman divergence. In our case $\Delta$ is indeed a Bregman divergence: that of the convex function $f:\rpe\rightarrow\mathbb{R}$ for all $\bv\in\rpe$ defined by:
\begin{equation*}
f(\bv):=\sum_{i\in[E]}v_i\ln(v_i),
\end{equation*}
which concludes the proof.
\end{proof}
%%\vspace*{-9pt}

\begin{proof}[Proof of Theorem~\ref{cbath}]
Choose any $\bu\in\val$ and $t\in[T]$. We immediately have $\val\subseteq\vat{t}$ by definition, and therefore $\bu\in\vat{t}$. Hence, by setting $\pcf$ such that $\pcf(\bv):=0$ for all $\bv\in\rpe$\,, setting $\cs\in\vat{t}$ and setting $\bz=\bs{w}_t$ in Lemma \ref{mainlem} we have $\bq=\snw{t}$ so that:
\begin{equation}\label{eopeq2}
\Delta(\bu,\bs{w}_t)\geq\Delta(\bu,\snw{t})\,.
\end{equation}
Alternatively, by setting $\pcf$ such that $\pcf(\bv):=\eta\gv{t}\cdot(\snw{t}-\bv)$ for all $\bv\in\rpe$\,, setting $\cs=\rpe$ and setting $\bz=\snw{t}$ in Lemma \ref{mainlem} we have $\bq=\bs{w}_{t+1}$ so that:
\begin{align}
\eta\gv{t}\cdot(\snw{t}-\bu)+\Delta(\bu,\snw{t})
\label{preq1}&\geq\eta\gv{t}\cdot(\snw{t}-\bs{w}_{t+1})+\Delta(\bs{u},\bs{w}_{t+1})\,.
\end{align}
Since $\mathbb{E}[\gv{t}]=\nabla\rf(\snw{t})$ and $\rf$ is linear we have:
\begin{equation}\label{preq2}
    \mathbb{E}[\gv{t}\cdot(\snw{t}-\bu)]=\rf(\snw{t})-\rf(\bu)\,.
\end{equation}
In what follows we use the fact that for all $x\leq 1$ we have:
\begin{equation}\label{xeq}
x(1-\exp(x))\geq -2 x^2\,.
\end{equation}
 For all $i\in[E]$\,, we have, by definition, that $\gc{t}{i}=\ext{i}{t}\cdot\hrv{t}$ so by Equation \eqref{algseceq2} we have:
 \begin{equation*}
 \gv{t}\cdot(\snw{t}-\bs{w}_{t+1})=\sum_{i\in[E]}\snc{t}{i}\ext{i}{t}\cdot\hrv{t}(1-\exp(\eta \ext{i}{t}\cdot\hrv{t}))\,.
 \end{equation*}
Since, for all $a\in[K]$\,, we have $\hrc{t}{a}\leq 1$ and hence, as $\eta<1$ and, for all $i\in[E]$ we have $\onorm{\ext{i}{t}}\leq 1$\,, we can invoke Equation \eqref{xeq}, which gives us:
\begin{equation}\label{geq1}
    \eta\gv{t}\cdot(\snw{t}-\bs{w}_{t+1})\geq-2\sum_{i\in[E]}\snc{t}{i}(\eta\ext{i}{t}\cdot\hrv{t})^2\,.
\end{equation}
By definition of $\hrv{t}$ we have, for all $i\in[E]$\,, that:
\begin{equation*}
    \ext{i}{t}\cdot\hrv{t}=\onorm{\ext{i}{t}}+e^i_{t,a_t}(1-r_{t,a_t})/\satc{t}{a_t}\\
    \leq\cnf{t}{i}+2e^i_{t,a_t}/\satc{t}{a_t}
\end{equation*}
so that since, for all $a\in[K]$\,, we have $\Pr[a_t=a]=\satc{t}{a}$ we also have:
\begin{equation}\label{geq2}
    \expt{(\ext{i}{t}\cdot\hrv{t})^2}\leq\cnf{t}{i}^2+\sum_{a\in[K]}(2e^i_{t,a}\cnf{t}{i}+4(e^i_{t,a})^2/\satc{t}{a})\,.
\end{equation}
Since, for all $i\in[E]$\, and $a\in[K]$, we have $e^i_{t,a}\leq1$ and $\cnf{t}{i}\leq 1$ and hence also $\cnf{t}{i}^2\leq\cnf{t}{i}$\, we then have:
\begin{equation}\label{geq4}
    \expt{(\ext{i}{t}\cdot\hrv{t})^2}\leq(2K+1)\cnf{t}{i}+4\sum_{a\in[K]}e^i_{t,a}/s_{t,a}\,.
\end{equation}
Note that since $\snw{t}\in\vat{t}$ we have:
\begin{equation}\label{geq5}
    \sum_{i\in[E]}\snc{t}{i}\cnf{t}{i}\leq1\,.
\end{equation}
Also, by definition of $\sat{t}$ we have:
\begin{align}
\label{geq6}
\sum_{i\in[E]}\snc{t}{i}\sum_{a\in[K]}e^i_{t,a}/\satc{t}{a}&=\sum_{a\in[K]}\frac{1}{\satc{t}{a}}\sum_{i\in[E]}\snc{t}{i}e^i_{t,a}
=\sum_{a\in[K]}\frac{1}{\satc{t}{a}}\satc{t}{a}=K\,.
\end{align}
Multiplying Inequality \eqref{geq4} by $\snc{t}{i}$\,, summing over all $i\in[E]$\,, and then substituting in Inequality \eqref{geq5} and Equation \eqref{geq6} gives us:
\begin{equation}\label{poeq1}
    \sum_{i\in[E]}\snc{t}{i}\expt{(\ext{i}{t}\cdot\hrv{t})^2}\leq(2K+1)+4K=6K+1\,.
\end{equation}
Taking expectations on Inequality \eqref{geq1} and substituting in  Inequality \eqref{poeq1} (after taking expectations) gives us:
\begin{equation} \label{poeq2}\expt{\eta\gv{t}\cdot(\snw{t}-\bs{w}_{t+1})}\geq-\eta^2(12K+2)\,.
\end{equation}
Taking expectations (over the draw $a_t\draw\sat{t}$) on Inequality \eqref{preq1}, substituting in Inequalities \eqref{eopeq2}, \eqref{preq2} and \eqref{poeq2}, and then rearranging gives us:
\begin{align*}
    \Delta(\bs{u},\bs{w}_t)-\expt{\Delta(\bs{u},\bs{w}_{t+1})}\geq\eta(\rf(\bu)-\rf(\snw{t}))-\eta^2(12K+2)\,.
\end{align*}
Summing this inequality over all $t\in[T]$\,, taking expectations (over the entire sequence of action draws) and noting that $\Delta(\bs{u},\bs{w}_{T+1})>0$ gives us:
\begin{equation*}
\Delta(\bs{u},\bs{w}_1)\geq\eta\sum_{t\in[T]}\expt{\rf(\bu)-\rf(\snw{t})}-\eta^2(12K+2)T\,.
\end{equation*}
Substituting in Equation \eqref{algseceq1} and rearranging then gives us, by definition of $\trf$ and $\rf$, the required goal:
\begin{equation*}
\sum_{t\in[T]}\expt{r_{t,a_t}}\geq\expt{\trf(\bu)}-\Delta(\bs{u},\bs{w}_1)/\eta-\eta(12K+2)T\,.
\end{equation*}    
\end{proof}

\subsection{Unbounded precision case} \label{apx:precision}

We will now show how to handle the case in which our numerical precision is unbounded, incurring a time complexity equal, up to a factor logarithmic in $T$, to that of Exp4 and adding only 1 to the regret. This additive factor, however, can be made arbitrarily small.

Let us restrict ourselves to compare against $\boldsymbol{u}$ with $\|\bs{u}\|_{\infty}\leq Z$ for some arbitrary $Z$. Note that this always has to be the case when each expert has a confidence of at least $1/Z$ on some trial. Our time complexity will be logarithmic in $Z$. At the beginning of trial $t$ we will now project (via the unnormalised relative entropy) $\boldsymbol{w}_t$ into the set $\{ \boldsymbol{v}\in\mathbb{R}^E \mid \|\bs{v}\|_{\infty} \leq Z \}$ which simply requires clipping its components. Since the set $\{ \boldsymbol{v}\in\mathbb{R}^E \mid \|\bs{v}\|_{\infty} \leq Z \}$ is convex and contains our comparator $\boldsymbol{u}$ this will not affect our regret bound.

For any $q\in\mathbb{R}$ let $\mathcal{V}_t(q)$ be the set of all $\boldsymbol{v}$ with $\boldsymbol{v}\cdot\boldsymbol{c}_t\leq q$. We note that given, for all $t\in[T]$, a value $q_t\in[1-1/T,1]$ we have that there exists $\hat{\boldsymbol{u}}\in\bigcap_t\mathcal{V}_t(q_t)$ such that the cumulative reward of $\boldsymbol{\pi}(\hat{\boldsymbol{u}})$ is no less than that of $\boldsymbol{\pi}(\boldsymbol{u})$ minus $1$. This means that, on any trial $t$ we can, instead of projecting into the set $\vat{t}$\,, project into the set $\mathcal{V}_t(q_t)$ for some $q_t\in[1-1/T,1]$ and add no more than one to the regret (by considering $\hat{\boldsymbol{u}}$ as the comparator instead of $\boldsymbol{u}$).

So the problem (for the projection step at time $t$ if necessary) is now to project into the set of all {$\{\boldsymbol{v}\,|\,\boldsymbol{v}\cdot\boldsymbol{c}\leq q_t\}$} for some arbitrary $q_t\in[1-1/T,1]$. Following our use of Lagrange multipliers, this means that we need to find $\lambda>0$ with $\sum_{i}c_{t,i}w_{t,i}\exp(-\lambda c_{t,i})\in[1-1/T,1]$. So consider the function $f$ defined by $f(\lambda'):=\sum_{i}c_{t,i}w_{t,i}\exp(-\lambda' c_{t,i})$.

Consider $\lambda':=ZE\ln(ZE)$ and take any $i\in[E]$. Since $w_{t,i}\leq Z$ we have that when $c_{t,i}<1/ZE$ then $c_{t,i}w_{t,i}\exp(-\lambda' c_{t,i})\leq c_{t,i}w_{t,i}<1/E$ and that when $c_{t,i}\geq 1/ZE$ then $c_{t,i}w_{t,i}\exp(-\lambda' c_{t,i})\leq Z\exp(-\lambda'/ZE)=1/E$. This implies that $f(\lambda')\leq 1$ and hence (since $f$ is monotonic decreasing) an acceptable $\lambda$ lies in $[0,ZE\ln(ZE)]$.

For general $\lambda'$ we note that $\nabla f(\lambda')=-\sum_i c_{t,i}^2w_{t,i}\exp(-\lambda' c_{t,i})\geq- f(\lambda')$. This means that $|\nabla f(\lambda)|\leq 1$. Since the length of the interval $[1-1/T,1]$ is $1/T$ this means that the length of the interval containing acceptable values of $\lambda$ is at least $1/T$.

So we have shown that either $\lambda=ZE\ln(ZE)$ is acceptable or the range of acceptable values of $\lambda$ is of length $1/T$ and lies in $[0,ZE\ln(ZE)]$ (which has length $ZE\ln(ZE)$). The ratio of these lengths is $ZET\ln(ZE)$ so interval bisection will find an acceptable value of $\lambda$ in $O(\ln(ZET\ln(ZE)))=O(\ln(EZT))$ steps.

So we have a time complexity $O(EK+E\ln(EZT))$ and we have only added $1$ to the regret (although this additive factor can be made arbitrarily small).

\section{Efficient implementation proof}\label{apx:efficient_imp}

We here prove the time complexity of \Cref{ballth}.
The per-trial time complexity of a direct implementation of \ose\ for this set of experts would be $\mathcal{O}(KN^2)$. We now show how to implement \ose\ in a per-trial time of only $\mathcal{O}(KN\ln(N))$. To do this first note that we can assume, without loss of generality, that for all $q,x,z\in\cons$ with $x\neq z$ we have $d(q,x)\neq d(q,z)$ since ties can be broken arbitrarily and balls can be duplicated. 

Given $x,z\in\cons$\,, $a\in[K]$ and $t\in[T]$ we let $y_{t,a}(x,z):=w_{t,i}$ and $\ty_{t,a}(x,z):=\snc{t}{i}$ where $i$ is the index of the expert corresponding to the ball-action pair with ball:
%\begin{equation*}
    $\{q\in\cons\,|\,d(x,q)\leq d(x,z)\},$
%\end{equation*}
and action $a$.
Given $x,z\in\cons$ let
%\begin{equation*}
$\mf(x,z):=\{q\in\cons\,|\,d(x,q)\geq d(x,z)\}\,.$
%\end{equation*}
It is straightforward to derive the following equations for the quantities in \ose\ at trial $t\in[T]$. First we have:
\begin{equation*} \onorm{\cnb{t}}=\sum_{a\in[K]}\sum_{x\in\cons}\sum_{z\in\mf(x,x_t)}y_{t,a}(x,z)\,.
\end{equation*}
For all $x,z\in\cons$ and $a\in[K]$ we have the following:
%\vspace*{-9pt}
\begin{itemize}
\setlength{\parskip}{0pt}
\setlength{\itemsep}{0pt}
   \item If $\onorm{\cnb{t}}$$\leq$$1$ or $z\notin\mf(x,x_t)$ then $\ty_{t,a}(x,z)$$=$$y_{t,a}(x,z)$.
   \item If $\onorm{\cnb{t}}>1$ and $z\in\mf(x,x_t)$ then $\ty_{t,a}(x,z)=y_{t,a}(x,z)/\onorm{\cnb{t}}$.
\end{itemize}
%\vspace*{-9pt}
For all $a\in[K]$ we have:
\begin{equation*} 
\satc{t}{a}=\sum_{x\in\cons}\sum_{z\in\mf(x,x_t)}\ty_{t,a}(x,z)\,.
\end{equation*}
Finally, for all $x,z\in\cons$ and $a\in[K]$ we have the following:
%\vspace*{-3pt}
\begin{equation*}
y_{(t+1),a}(x,z) = \begin{cases*}
                    \ty_{t,a}(x,z) & if  $z\notin\mf(x,x_t)$\,,  \\
                     \ty_{t,a}(x,z)\exp(\eta \ext{i}{t}\cdot\hrv{t}) & if $z\in\mf(x,x_t)$\,.
                 \end{cases*} 
\end{equation*}

\begin{algorithm}[t]
\caption{\textsc{Query}$(\q)$}
\label{alg:Query}
\begin{enumerate}
\setlength{\parskip}{0pt}
\setlength{\itemsep}{0pt}
    \item For all $i\in[n]\cup\{0\}$ let $\gamma_{i}$ be the ancestor of $\q$ at depth $i$ in $\mathcal{D}$
    \item Set
$\sigma_n\la\psi(\gamma_{n})\phi(\gamma_{n})$
\item Climb $\mathcal{D}$ from $\gamma_{n-1}$ to $\gamma_{0}$. When at $\gamma_{i}$  do as follows:
\begin{enumerate}
\setlength{\parskip}{0pt}
\setlength{\itemsep}{0pt}
\item If $\gamma_{i+1}=\lc{\gamma_i}$ then set $\sigma_i\la\phi(\gamma_i)(\sigma_{i+1}+\psi(\rc{\gamma_i})\phi(\rc{\gamma_i}))$
\item If $\gamma_{i+1}=\rc{\gamma_i}$ then set $\sigma_i\la\phi(\gamma_i)\sigma_{i+1}$
\end{enumerate}
\item Return $\sigma_0$
\end{enumerate}
\end{algorithm}

\begin{algorithm}[t]
\caption{\textsc{Update}$(\q,\uc)$}
\label{alg:Update}
\begin{enumerate}
\setlength{\parskip}{0pt}
\setlength{\itemsep}{0pt}
    \item For all $i\in[n]\cup\{0\}$ let $\gamma_{i}$ be the ancestor of $\q$ at depth $i$ in $\mathcal{D}$
    \item\label{ups2} Descend $\mathcal{D}$ from $\gamma_0$ to $\gamma_{n-1}$. When at $\gamma_i$ set:
\begin{enumerate}
\setlength{\parskip}{0pt}
\setlength{\itemsep}{0pt}
\item $\phi(\lc{\gamma_i})\la\phi(\gamma_i)\phi(\lc{\gamma_i})$ 
\item $\phi(\rc{\gamma_i})\la\phi(\gamma_i)\phi(\rc{\gamma_i})$
\item  $\phi(\gamma_i)\la1$
\end{enumerate}
\item\label{ups3} For all $i\in[n-1]\cup\{0\}$, if $\gamma_{i+1}=\lc{\gamma_{i}}$ then set $\phi(\rc{\gamma_{i}})\la \uc\phi(\rc{\gamma_{i}})$ 
\item\label{ups4} Set $ \phi(\gamma_n)\la \uc\phi(\gamma_n)$ 
\item\label{ups5} Climb $\mathcal{D}$ from $\gamma_{n-1}$ to $\gamma_0$. When at $\gamma_i$ set:\\
%\begin{equation*}
   $\psi(\gamma_i)\la\psi(\lc{\gamma_i})\phi(\lc{\gamma_i})+\psi(\rc{\gamma_i})\phi(\rc{\gamma_i})$
%\end{equation*}
\end{enumerate}
\end{algorithm}

%\vspace*{-9pt}
% %\vspace*{-9pt}
% \begin{itemize}
% \setlength{\parskip}{0pt}
% \setlength{\itemsep}{0pt}
%     \item If $z\notin\mf(x,x_t)$ then $y_{(t+1),a}(x,z)=\ty_{t,a}(x,z)$
%     \item If $z\in\mf(x,x_t)$ then: \begin{equation*}
%         y_{(t+1),a}(x,z)=\ty_{t,a}(x,z)\exp(\eta \ext{i}{t}\cdot\hrv{t})
%     \end{equation*}
% \end{itemize}
% %\vspace*{-9pt}

Hence, to implement \ose\ we need, for each $x\in\cons$ and $a\in[K]$\,, a data structure that implicitly maintains a function $\hy:\cons\rightarrow\mathbb{R}^+$ and has the following two subroutines, that take parameters $\q\in\cons$ and $\uc\in\mathbb{R}_+$.
%\vspace*{-9pt}
\begin{enumerate}
\setlength{\parskip}{0pt}
\setlength{\itemsep}{1pt}
\item \textsc{Query}$(\q)$:~ Compute $\sum_{z\in\mf(x,\q)}\hy(z)$.
\item \textsc{Update}$(\q,\uc)$:~ Set $\hy(z)\la \uc \hy(z)$ for all $z\in\mf(x,\q)$.
\end{enumerate}
%\vspace*{-9pt}
Now fix $x\in\cons$ and $a\in[K]$. Let $\hy$ be as above. On each trial $t\in[T]$ and for all $z\in\cons$\,, $\hy(z)$ will start equal to $y_{t,a}(x,z)$ and change to $\ty_{t,a}(x,z)$ and then $y_{(t+1),a}(x,z)$ by applying the \textsc{Update} subroutine.

We now show how to implement these subroutines implicitly in a time of $\mathcal{O}(\ln(N))$ as required. Without loss of generality, assume that $N=2^n$ for some $n\in\mathbb{N}$. Our data structure is based on a balanced binary tree $\mathcal{D}$ whose leaves are the elements of $\cons$ in order of increasing distance from $x$. This implies that for any $z\in\cons$ we have that $\mf(x,z)$ is the set of leaves that do not lie on the left of $z$. Given a node $v\in\mathcal{D}$ we let $\anc{v}$ be the set of ancestors of $v$ and let $\des{v}$ be the set of all $z\in\cons$ which are descendants of $v$. For any internal node $v$ let $\lc{v}$ and $\rc{v}$ be the left and right children of $v$ respectively.

We maintain functions $\phi,\psi:\mathcal{D}\rightarrow\rplus$ \,such that for all $v\in\mathcal{D}$ we have:
\begin{equation}\label{dseq1}
\psi(v)\prod_{v'\in\anc{v}}\phi(v')=\sum_{z\in\des{v}}\hy(z)\,.
\end{equation}

The pseudo-code for the subroutines \textsc{Query} and \textsc{Update} are given in Algorithms \ref{alg:Query} and \ref{alg:Update} respectively. We now prove their correctness.
We first consider the \textsc{Query} subroutine with parameter $\q\in\cons$. From Equation \eqref{dseq1} we see that, by (reverse) induction on $i\in[n]\cup\{0\}$, we have:
\begin{equation*}
\sigma_i\prod_{v'\in\anc{\gamma_i}\setminus\{\gamma_i\}}\phi(v')=\sum_{z\in\des{\gamma_i}\cap\mf(x,\q)}\hy(z)\,.
\end{equation*}
Since $\gamma_0$ is the root of $\mathcal{D}$, we have $\sigma_0=\sum_{z\in\mf(x,\q)}\hy(z)$ as required.
Now consider the \textsc{Update} subroutine with parameters $q\in\cons$ and $\uc\in\mathbb{R}_+$. Let $\hy$ be the implicitly maintained function before the subroutine is called. For Equation \eqref{dseq1} to hold after the subroutine is called we need:
\begin{equation}\label{dseq2}
\psi(v)\prod_{v'\in\anc{v}}\phi(v')=\sum_{z\in\des{v}}\hy'(z)\,.
\end{equation}
where for all $z\in\cons$ we have:
\begin{equation*}
\hy'(z):=\indi{z\notin\mf(x,\q)}\hy(z)+\indi{z\in\mf(x,\q)}\uc\hy(z)\,.
\end{equation*}
We shall now show that Equation \eqref{dseq2} does indeed hold after the subroutine is called, which will complete the proof. To show this we consider each step of the subroutine in turn. After Step \ref{ups2} we have (via induction) that:
%\vspace*{-9pt}
\begin{itemize}
\setlength{\parskip}{0pt}
\setlength{\itemsep}{0pt}
\item For all $v\in\anc{\q}$ we have $\phi(v)=1$.
\item For all $v\in\mathcal{D}\setminus\anc{\q}$ we have:
\begin{equation*}
\psi(v)\prod_{v'\in\anc{v}}\phi(v')=\sum_{z\in\des{v}}\hy(z)\,.
\end{equation*}
\end{itemize}
%\vspace*{-9pt}
So, since $\mf(x,\q)$ is the set of all $z\in\cons$ that do not lie to the left of $\q$ in $\mathcal{D}$ we have that, after Step \ref{ups4} of the algorithm, the following holds:
%\vspace*{-9pt}
\begin{itemize}
\setlength{\parskip}{0pt}
\setlength{\itemsep}{0pt}
\item For all $v\in\anc{\q}$ we have $\phi(v)=1$,
\item For all $v\in\mathcal{D}\setminus\anc{\q}$ we have:
\begin{equation*}
\psi(v)\prod_{v'\in\anc{v}}\phi(v')=\sum_{z\in\des{v}}\hy'(z)\,.
\end{equation*}
\end{itemize}
%\vspace*{-9pt}
Hence, by induction, we have that, after Step \ref{ups5} of the algorithm, it is the case that for all $v\in\anc{\q}$ we have:
%\begin{equation*}
$\psi(v)=\sum_{z\in\des{v}}\hy'(z)\,$. 
%\end{equation*}
So since $\phi(v)=1$ for all $v\in\anc{\q}$ and Step~\ref{ups5}
does not alter $\phi(v)$ or $\psi(v)$ for any $v\in\mathcal{D}\setminus\anc{\q}$ we have Equation \eqref{dseq2}.
\hfill$\blacksquare$

\section{Lower bound proof}\label{apx:lowerbound}
\begin{proposition}%\MT{Alberto (and Stephen)? please check and fix.} 
Take any learning algorithm. Given any basis $\basis$ and any $M\in\mathbb{N}$ then for any sequence of disjoint basis elements $\seq{\bll_j}{j\in[M]}$ there exists a sequence of corresponding actions $\seq{\cra{j}\in[K]}{j\in[M]}$ such that an adversary can force:
\begin{equation*}
    \sum_{t\in [T]}\sum_{j\in[M]} \indi{x_t \in \mathcal{B}_j}r_{t,b_j} -
    \sum_{t\in [T]} \mathbb{E}[r_{t,a_t}]\in \Omega(\sqrt{MKT})
\end{equation*}    
    %Let $\basis$ be a basis with at least $M$ disjoint basis sets. Then an adversary can force a regret of $\Omega(\sqrt{MKT})$.
\end{proposition}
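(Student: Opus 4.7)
The plan is to reduce from the classical $K$-armed adversarial bandit lower bound of $\Omega(\sqrt{KT})$ due to \citet{auer2002nonstochastic}. The adversary splits the $T$ trials into $M$ consecutive blocks of $T/M$ trials each (assume $M$ divides $T$ for simplicity) and associates block $j$ with the basis element $\bll_j$. Within block $j$, every context $x_t$ is set to some fixed element of $\bll_j$ (if $\bll_j=\emptyset$ the corresponding summand in the bound vanishes, so such blocks can be discarded). Because the basis elements $\bll_1,\dots,\bll_M$ are disjoint, the learner can tell exactly which block it is in from the context alone, so the contextual problem decomposes into $M$ independent $K$-armed bandit instances of horizon $T/M$.

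Next, inside each block $j$, I instantiate the standard randomized hard instance: draw $\cra{j}$ uniformly at random from $[K]$ and use the Bernoulli-type construction in which action $\cra{j}$ has expected reward $\epsilon$ larger than every other action in $[K]\cup\{\square\}$, with rewards rescaled to $[-1,1]$ and $\epsilon$ tuned to $\Theta(\sqrt{KM/T})$. The abstention action $\square$ plays no distinguished role here: since in the planted instance all non-best arms (including $\square$) carry identical expected reward, the KL-divergence / Pinsker argument in the standard proof applies verbatim and shows that any learner's expected regret inside block $j$, measured against the fixed comparator $\cra{j}$, is at least $c\sqrt{K\cdot T/M}$ for a universal constant $c>0$.

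Finally, I sum over blocks. Since the reward distributions in different blocks are drawn independently and the learner's actions in one block do not influence another block's rewards, expected regrets add, yielding total expected regret at least $M\cdot c\sqrt{KT/M}=c\sqrt{MKT}$ against the random block-wise comparator $\sum_{t\in[T]}\sum_{j\in[M]}\indi{x_t\in\bll_j}r_{t,\cra{j}}$. By Yao's minimax principle (averaging over the random choice of $(\cra{j})_{j\in[M]}$), there exists a deterministic sequence $(\cra{j})_{j\in[M]}\in[K]^M$ against which the adversary can force the same $\Omega(\sqrt{MKT})$ lower bound, which is the statement of the proposition.

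The main obstacle is the minor bookkeeping needed to certify that the presence of the abstention action $\square$ does not let the learner evade the classical lower bound, and that the information-theoretic argument survives the rescaling of rewards to $[-1,1]$. Both are routine: in the planted instance $\square$ has the same mean as any non-planted arm in $[K]$ and is thus information-theoretically indistinguishable from them, while the two-hypothesis KL calculation is unaffected by a constant rescaling of the reward range.
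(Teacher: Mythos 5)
Your proposal is correct and follows essentially the same route as the paper's proof: both partition the time horizon so that each disjoint basis element $\bll_j$ receives $T/M$ rounds, instantiate the classical Auer et al.\ multi-armed bandit lower bound independently on each block (noting that the abstention arm, being a priori known to be non-optimal in the planted instance, cannot help the learner), and sum the per-block regrets of order $\sqrt{K\,T/M}$ to obtain $\Omega(\sqrt{MKT})$. The only cosmetic differences are that the paper phrases the per-block bound as $c\sqrt{(K-1)|T'|/M}$ over an active subset $T'$ of rounds and cites Seldin and Lugosi for the block decomposition, whereas you make the Yao/randomization step over the choice of $(\cra{j})_j$ explicit.
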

\begin{proof}
In this scenario, at each time step, either a single expert (i.e., the basis element containing the current context $x_t$) is active, making predictions based on its label, or no expert is active, prompting the learner to abstain and thus incur zero reward or cost.

Therefore we define $T' = \{t \in [T] \,|\, \sum_{j\in[M]} \indi{x_t \in \mathcal{B}_j} = 1\}$ as the set of timesteps in which the learner is going to play. Since the concept of abstention is that our algorithm is not going to pay anything for the timesteps in which we abstain, we can see that:

\begin{align*}
    \sum_{t\in [T]}\sum_{j\in[M]} \indi{x_t \in \mathcal{B}_j}r_{t,b_j} -
    \sum_{t\in [T]} \mathbb{E}[r_{t,a_t}]
    =
    \sum_{t\in T'}r_{t,b_j} -
    \sum_{t\in T'} \mathbb{E}[r_{t,a_t}] \,,
\end{align*}

For any ball $j \in [M]$, we define $T_j = \{t \in [T'] \,|\, \indi{x_t \in \mathcal{B}_j}\}$. Following the ideas of \citet{seldin2016lower}, for any of the sets $T_j$ we can create a multi-armed bandit instance as the one described in the lower bound by \citet{auer2002nonstochastic}.
Note that in the lower bound construction, the abstention arm would be a forehand known suboptimal arm, which results in a lower bound of the order $c\sqrt{(K-1)T}$, for the constant $c = \frac{\sqrt{2} - 1}{\sqrt{32\ln(4/3)}} > 0$.
Since the presented context $x_t$ is chosen adversarially at each time step, we can ensure that each basis element is activated for $|T'|/M$ time steps, obtaining: 

\begin{align*}
    \sum_{j \in [M]} \left( \sum_{s \in T'_j} r_{s,b_j} -
    \sum_{s\in T'_j} \mathbb{E}[r_{s,a_s}] \right)
    & \geq
    \sum_{j \in [M]} c\sqrt{(K-1)|T'_j|} 
    \\ & =
    \sum_{j \in [M]} c\sqrt{(K-1)|T'|/M} 
    \\ & =
    c\sqrt{M(K - 1)|T'|}
    % cM\sqrt{(K-1)|T'_j|} \,.
\end{align*}

As we can choose $|T'|$ to be any fraction of T, we end up with the desired lower bound of the order $\Omega(\sqrt{MKT})$, which matches, up to logarithmic factors, the cumulative reward bound presented in \Cref{ballth}.
\end{proof}

\section{Overlapping balls extension}
\label{apx:overlapping}
In this section, we present the theorem that allows us to present the results of overlapping balls as expressed in Section \ref{sec:efficientlearning}. Note that Theorem \ref{ballth} is the special case of Theorem \ref{thm:OBE4overlapping} when the balls are disjoint and $\uw{j}=1$ for all $j\in[M]$.

\begin{theorem}
\label{thm:OBE4overlapping}
Let $M\in\mathbb{N}$ and $\{(\bll_j,\cra{j},\uw{j})\,|\,j\in[M]\}$ be any sequence 
such that $\bll_j$ is a ball, $\cra{j}\in[K]$ is an action, and $\uw{j}\in[0,1]$ is such that for all $x\in\cons$ we have:
%Suppose we have some $M\in\mathbb{N}$ and a sequence $\{(B_j,\hat{a}_j,\uw{j})\,|\,j\in[M]\}$
%such that $B_j$ is a ball, $\hat{a}_j\in[K]$ is an action, and $\uw{j}\in[0,1]$ is such that for all $x\in[N]$ we have:
\begin{equation*}
 \sum_{j\in[M]}\indi{x\in \bll_j}\uw{j}\leq 1\,.
\end{equation*}
For all $t\in[T]$ define:
\begin{equation*}
{r}^*_t:=\sum_{j\in[M]}\indi{x_t\in \bll_j}\uw{j}{r}_{t,\cra{j}}\,,
\end{equation*}
which represents the reward of the policy induced by 
$\{(\bll_j,\cra{j},u_j)\,|\,j\in[M]\}$ on trial $t$.
%The cumulative reward of \obe\ is then bounded by:
The regret of \ose, with the set of experts given in Section \ref{sec:efficientlearning} and with correctly tuned parameters, is then bounded by:
\begin{equation*}
\sum_{t\in[T]}{r}^*_t-\sum_{t\in[T]}\mathbb{E}[{r}_{t,a_t}]\in \mathcal{O}\left(\sqrt{\ln(KN)KT\sum_{j\in[M]}u_j}\right)\,.
\end{equation*}
Its per-trial time complexity is: 
\begin{equation*}
\mathcal{O}(KN\ln(N))\,.
\end{equation*}
\end{theorem}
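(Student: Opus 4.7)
The plan is to reduce to an application of Theorem~\ref{cbath} using the expert set defined in Section~\ref{sec:efficientlearning} (one expert per pair of ball and action), with a carefully chosen comparator $\bs{u}$ that directly encodes the overlapping-ball assignment. Recall that for the expert $i$ corresponding to a pair $(\bll,k) \in \basis \times [K]$, we have $e^i_{t,b} = \indi{x_t \in \bll}\indi{b=k}$, so $\cnf{t}{i} = \indi{x_t \in \bll}$.

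First I would define $\bs{u} \in \mathbb{R}_+^E$ by setting $u_i = \uw{j}$ whenever $i$ is the index of the expert associated with $(\bll_j, \cra{j})$ for some $j \in [M]$, and $u_i = 0$ otherwise. Membership in $\val$ then follows immediately from the coverage hypothesis: for every trial $t$,
\[
\onorm{\lcc{\bs{u}}{t}} \;=\; \sum_{j \in [M]} \uw{j}\,\indi{x_t \in \bll_j} \;\leq\; 1\,,
\]
and a parallel computation yields $\trf(\bs{u}) = \sum_{t \in [T]} r^*_t$. This is the whole point of allowing $\uw{j} \in [0,1]$: the weights give us exactly the slack to fit overlapping balls inside a single element of $\val$, something \textsc{SpecialistExp} cannot do.

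Next, with $S := \sum_j \uw{j}$, I would initialize $w_{1,i} := 1/(NK)$ uniformly over the $E = NK$ experts and bound
\[
\Delta(\bs{u}, \bs{w}_1) \;=\; \sum_{j \in [M]} \uw{j}\ln(\uw{j} NK) - S + 1 \;\leq\; S\ln(NK) + 1\,,
\]
using $\uw{j} \leq 1$. Plugging $\bs{u}$ and this bound into Theorem~\ref{cbath} gives regret at most $(S\ln(NK)+1)/\eta + \eta(12K+2)T$, and tuning $\eta = \sqrt{(S\ln(NK)+1)/((12K+2)T)}$ produces the claimed $\mathcal{O}(\sqrt{\ln(KN)KTS})$ bound. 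Since $S$ is generally not known a priori, one either uses the bound $S \leq M$ to set $\eta$ (losing only a factor under $\sqrt{\cdot}$) or runs a standard doubling trick on $\eta$ to recover the $\sqrt{S}$ dependence within constants.

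For the runtime, the $\mathcal{O}(KN\ln N)$ per-trial cost is inherited directly from the ball-basis data structure of Appendix~\ref{apx:efficient_imp}, whose correctness and complexity proofs nowhere use that the comparator's balls are disjoint. Hence the same implementation serves here unchanged. The only genuinely new ingredient is the verification that the coverage condition coincides exactly with $\bs{u} \in \val$; the main obstacle I expect is the mild bookkeeping around tuning $\eta$ without knowing $S$, which is handled by the standard doubling argument.
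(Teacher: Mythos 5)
Your proposal is correct and follows exactly the route the paper takes: its proof of this theorem is the one-line remark ``Direct from Theorem~\ref{cbath} using the experts (with efficient implementation) given in Section~\ref{sec:efficientlearning}'', and your argument simply supplies the details that remark leaves implicit --- the comparator $\bs{u}$ encoding the weights $\uw{j}$, the check that the coverage condition is precisely $\bs{u}\in\val$, the identity $\trf(\bs{u})=\sum_t r^*_t$, the entropy bound, and the tuning of $\eta$. The observation that the $\mathcal{O}(KN\ln N)$ data structure never uses disjointness of the comparator's balls is also exactly why the paper can reuse the same implementation here.
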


\begin{proof}
Direct from Theorem \ref{cbath} using the experts (with efficient implementation) given in Section \ref{sec:efficientlearning}
\end{proof}

\section{The details of the graph bases}
\label{sec:thedeialsofthebases}

%\SSS{Do we still call bases?}\MT{not in the main body anymore, but I introduced the terms basis set, basis in the experimental section}

This section expands the definition and explanations for the bases we used in the Experiment. Remember that we refer to any set of experts that correspond to set-action pairs of the form $(\bll,k)\in 2^\cons\times[K]$ as a \emph{basis elements}, and a set of basis elements as \emph{basis}.

%We now introduce further examples of bases, particularly focusing on the case where we are given a graph over the contexts. 
%\SSS{Lead-in needs some change}

\subsection{$p$-seminorm balls on graphs}
%\SSS{Some lead-in is needed before we introduce a graph $p$-ball. I write the dummiest lead-in here but will be polished. The lead-in ideally serves to appeal the versatility of our algs -- can be applied for both graphs and vectors, as far as I understand.}

\SSS{Do we explicitly mention like if we choose specialists like GABA we strictly improve the regret bound of Gaba?}\MT{this would be nice, even just recovering Gaba's bound. But I thought we were not sure how to do that.}

As we see in Sec.~\ref{sec:efficientlearning}, the \ose\ seems to work only for vector data. 
However, in the following sections, we explore how our \ose\ algorithm can be applied to graph data by creating a ball structure over the graph.

We first introduce the notations of a graph. A graph is a pair of \emph{nodes} $V := [N]$ and \emph{edges} $E$. An edge connects two nodes, and we assume that our graph is \emph{undirected} and \emph{weighted}. 
For each edge $\{i,j\} \in E$, we denote its weight by $c_{ij}$. 
For convenience, for each pair of nodes $i,j$ with $\{i,j\}\notin E$, we define $c_{ij}=0$. 
%\AM{Changed the notations.} \SSS{Thanks!}
%We represent a graph by an \textit{adjacency matrix} $W $$\in $$\R^{n \times n}$; the $ij$-th element and $ji$-th element of $W$ are the weight of the edge between $i$ and $j$, and we define $w_{ij} = w_{ji} := 0$ if there is no edge between nodes $i$ and $j$.
%In the rest of this paper we use these notations.

%We now discuss how we form a ball over a graph. 
To form a ball over a graph, a family of metrics we are particularly interested in is given by $p$-norms on a given graph $G$. 
Let
\begin{equation}
\label{eq:pmet}
d_p(i,j) := \left({\min\limits_{\substack{\bs{u}\in\R^N\\u_{i}-u_{j}=1}} \sum\limits_{s,t\in V} c_{st}|u_s - u_t|^p}\right)^{-1/p}\,.
%d_p(i,j) := \left(\min_{\bs{u}\in\R^N} \left\{\sum_{i,j\in V} c_{ij}|u_i - u_j|^p \quad\mathrm{s.t.}\ u_{i}-u_{j}=1\right\}\right)^{-1/p}\,.
\end{equation}
%\AM{$u_s,u_t$ in min should be $u_i,u_j$?}
which is a well-defined metric for $p\in [1,\infty)$ if the graph is connected and may be defined for $p=\infty$ by taking the appropriate limits.  When $p=2$ this is the square root of the {\em effective resistance} circuit between nodes $i$ and $j$ which comes from interpreting the graph as an electric circuit where the edges are unit resistors and the denominator of Equation~\eqref{eq:pmet} is the power required to maintain a unit voltage difference between $u$ and $v$~\citep{doyle1984random}.  
More generally, $d_p(i,j)^p$ is known as $p$-(effective) resistance~\citep{herbster2009predicting,alamgir2011phase,saito2023multi}.    
When $p\in \{1,2,\infty\}$ there are natural interpretation of the $p$-resistance.  
In the case of $p=1$, we have that the effective is equal to one over the number of edge-disjoint paths between $i$ and $j$ which is equivalently one over the minimal cut that separates $i$ from $j$.  
When $p=2$ it is the effective resistance as discussed above.  
And finally when $p=\infty$ we have that $d_{\infty}$ is the geodesic distance (shortest path) between $i$ and $j$.
Note that, interestingly, there are at most $2N$ distinct balls for $d_1$; as opposed to the general bound $O(N^2)$ on the number of metric balls. 
This follows since $d_1$ is an \emph{ultrametric}.
A nice feature of metric balls is that they are ordinal, i.e., we can take an increasing function of the distance and the distinct are unchanged.
The time complexity for each ball is as follows.
For $d_{1}$ ball, we compute every pair of distance in $\mathcal{O}(N^3)$ using the Gomory-Hu tree~\citep{gomory1961multi}.
For $d_{2}$ ball, it is actually enough to compute the pseudoinverse of graph Laplacian once, which costs $\mathcal{O}(N^3)$~\citep{doyle1984random}.
For $d_{\infty}$ ball, we can compute every pair of distance in $\mathcal{O}(N^3)$ by Floyd–Warshall algorithm~\citep{floyd1962algorithm}.
%Note that bases based on metric balls are trivially complete, as they contain all singleton nodes.

%We define basis elemen $\subseteq [N]$ is a non-empty set of the ground set and a basis 
%$\bss = \{\bel_1,\ldots,\bel_N\}\subseteq 2^{[n]}$ 
%is a set of $N$ basis elements. 

%\section{Further examples of graph bases}\label{s:further_bases}

%\SSS{Do we need the def of graph before we define $p$-norm ball? Or do we move $p$-norm ball here? This is a bit key question for this paper... how do we make basis and bandit pieces not that separated. I recommend to move $p$-ball here and to motivate two things together in introduction section}

%We now introduce further examples of bases, particularly focusing on the case where we are given a graph over the contexts. In this section, we consider only bases formed via a set of subsets (a.k.a clusters) $C \subseteq 2^{[N]}$. Each of these subsets induces $K$ basis elements: one for each action $a\in[K]$. Specifically, the basis element $\beta:[N]\rightarrow\Ksq$ corresponding to the pair $(C,a)$ is such that $\beta(x)$ is equal to $a$ whenever $x\in C$ and equal to $\square$ otherwise. Hence, in this section, we equate a basis with a set of subsets of $[N]$.

%\AM{Although community detection basis set is general and useful in practice, it is quite heuristic concept. Therefore, I would put it after graph convexity basis set.}
%\SSS{I have a different opinion on this point. More in the overleaf comment.}
\subsection{Community detection bases}

In this section, we consider only bases formed via a set of subsets (a.k.a clusters) $C \subseteq 2^{[N]}$. Each of these subsets induces $K$ basis elements: one for each action $a\in[K]$. Specifically, the basis element $\beta:[N]\rightarrow\Ksq$ corresponding to the pair $(C,a)$ is such that $\beta(x)$ is equal to $a$ whenever $x\in C$ and equal to $\square$ otherwise. Hence, in this section, we equate a basis with a set of subsets of $[N]$.

%\AR{Do we need to explain the algorithm? I would remove the explanation and just point to the original paper for more details}
%\AM{A community detection algorithm used is completely arbitrary. The Louvain method is effective in practice, but it is just an example. Can we describe the contents with an arbitrary community detection algorithm and then suggest the Louvain method as an example?}
%\SSS{Agreed. More of my comments is in the overleaf comment.}\MT{I agree this sounds cool. Could you rewrite this subsection in this way? thanks.}
We can compute a basis for a given graph $G=(V,E)$ using community detection algorithms. 
Community detection is one of the most well-studied operations for graphs, where the goal is to find a partition $\{C_1,\dots, C_q\}$ of $V$ (i.e., $\bigcup_{i=1}^q C_i=V$ and $C_i\cap C_j=\emptyset$ for $i\neq j$) so that each $C_i$ is densely connected internally but sparsely connected to the rest of the graph~\citep{Fortunato2010community}. 
There are many community detection algorithms, all of which can be used here, but the most popular algorithm is the Louvain method~\citep{blondel2008fast}. 
We briefly describe how this algorithm works. The algorithm starts with an initial partition $\{\{v\}\mid v\in V\}$ and aggregates the clusters iteratively: 
For each $v \in V$, compute the gain when moving $v$ from its current cluster to its neighbors' clusters and indeed move it to a cluster with the maximum gain (if the gain is positive). 
Note that the gain is evaluated using \emph{modularity}, i.e., the most popular quality function for community detection~\citep{newman2004finding}. 
The algorithm repeats this process until no movement is possible. 
Then the algorithm aggregates each cluster to a single super node (with appropriate addition of self-loops and change of edge weights) 
and repeats the above process on the coarse graph as long as the coarse graph is updated. 
Finally, the algorithm outputs the partition of $V$ in which each cluster corresponds to each super node in the latest coarse graph. 
Note that it is widely recognized that the Louvain method works in $\mathcal{O}(N \log N)$ in practice~\citep{traag2015faster}.
\SSS{Feel free to fix this sentence to whatever you like @atsushi}

To obtain a finer-grained basis, we apply the so-called greedy peeling algorithm for each $C_i$ in the output of the Louvain method. 
For $C_i\subseteq V$ and $v\in C_i$, we denote by $d_{C_i}(v)$ the degree of $v$ in the induced subgraph $G[C_i]$. 
For $G[C_i]$, the greedy peeling iteratively removes a node with the smallest degree in the currently remaining graph and obtains a sequence of node subsets from $C_i$ to a singleton. Specifically, it works as follows: Set $j\leftarrow |C_i|$ and $C_i^{(j)}\leftarrow C_i$. For each $j=|C_i|,\dots, 2$, compute $v_\text{min}\in \argmin\{d_{C_i^{(j)}}(v)\mid v\in C_i^{(j)}\}$ and $C_i^{(j-1)}\leftarrow C_i^{(j)}\setminus \{v_\text{min}\}$. 
%Return $\{C_i^{(|C_i|)},\dots, C_i^{(1)}\}$. 
Using a sophisticated data structure, this algorithm runs in linear time~\citep{lanciano2023survey}.

%We first run the Louvain method and obtain a partition $\{C_1,\dots, C_q\}$ of $V$. 
%Then for each $C_i\in \{C_1\dots, C_q\}$, we apply the greedy peeling to $G[C_i]$ and obtain $\{C_i^{(|C_i|)}, \dots, C_i^{(1)}\}$. 
In summary, our community detection basis is the collection of node subsets $\{C_i^{(j)}\mid i = 1,\dots, q,\, j = 1,\dots, |C_i|\}$ together with $\{\{v\}\mid v\in V\}$ for completeness. 
%Given a graph $G$, we can use any community detection algorithm (e.g., the Louvain method~\cite{blondel2008fast}) to construct a basis set as follows: 
%Let $\{C_1,\dots, C_k\}$ be the output of community detection algorithm for $G$, i.e., $\bigcup_{i=1}^k C_i=V$ and $C_i\cap C_j=\emptyset$ for distinct $i,j\in \{1,\dots, k\}$. \footnote{\SSS{this $k$ is different from $K$ above?}}
%Those node subsets should be reasonable communities, but they may be insufficient to cover all classes well. 
%To overcome this, for each $C\in \{C_1,\dots, C_k\}$, 
%we apply the following procedure: 
%For the induced subgraph $(V(C),E(C))$, 
%remove a node having the smallest degree in terms of the currently remaining subgraph until we are left with a singleton. 
%Then we collect all of $\sum_{i=1}^k|C_i|=|V|$ node subsets appearing in the above process as bases together with $\{\{v\}:v\in V\}$ for completeness. 
%The above node-removing process is called greedy peeling, which is known to be effective in detecting a community in a graph~\cite{charikar2000greedy}. 

\subsection{Graph convexity bases}
An alternative to metric balls and communities are, for example, (geodesically) convex sets in a graph. They correspond to the inductive bias that if two nodes prefer the same action, then also the nodes on a shortest path between the two tend to prefer the same action. Geodesically convex sets are well-studied \citep{van1993theory,pelayo2013geodesic} and have been recently used in various learning settings on graphs \citep{bressan2021exact, thiessen2021active}. %\citep{bressan2021exact,seiffarth2019maximal,stadtlander2021learning, vsubelj2019convexity,  thiessen2021active,  thiessen2023online}. 
Similarly to convex sets in the Euclidean space, a set $C$ of nodes is \emph{convex} if the nodes of any shortest path with endpoints in $C$ are in $C$, as well. More formally, the (geodesic) \emph{interval} $I(u,v)=\{x\in V : x \text{ is on a shortest path between } u \text{ and } v\}$ of two nodes $u$ and $v$ contains all the nodes on a shortest path between them.
For a set of node $A$ we define $I(A)=\cup_{a,b\in A}I(a,b)$ as a shorthand notation for the union of all pairwise intervals in $A$. A set $A$ is (geodesically) convex iff $I(A)= A$ and the \emph{convex hull} $\conv(A)$ of a set $A$ is the (unique) smallest convex set containing $A$. Note that for $u,v\in V$, $I(u,v)$ and $\text{conv}(\{u,v\})$ are typically different sets. Indeed, $I(u,v)$ is in general non-convex, as nodes on a shortest path between two nodes in $I(u,v)$ (except for $u,v$) are not necessarily contained in $I(u,v)$.
%\AM{It might be better to explain how $I(u,v)$ and $\text{conv}(\{u,v\})$ are different (for readers non-familiar with graph convexity). For example, we can say ``Note that for $u,v\in V$, $I(u,v)$ and $\text{conv}(\{u,v\})$ are different. Indeed, $I(u,v)$ is not necessarily convex, as nodes on a shortest path between two nodes in $I(u,v)$ (except for $u,v$) are not necessarily contained in $I(u,v)$.'' MT: thanks yes I slightly modified it and put it in.}
%Lastly, we call a set $C$ of nodes  ${r}$-convex if there exists a set $C'\subseteq C$ of size no greater than ${r}$ such that $C=\conv(C')$. 
%\SSS{To confirm, this $k$ is different than the $K$ (the number of true classes), is it right?}\MT{changed it to ${r}$}
%We define$I^1(A)=I(A)=\cup_{a,b\in A}I(a,b)$ and $I^{k+1}(A)=I^k(A)$ for $k\geq 2$. The (geodesic) \emph{convex hull} $\conv(A)=\bigcup_{k\in\bbN} I^k(A)$ is the smallest superset $C$ of $A\subseteq V$ containing all nodes on shortest paths with endpoints in $C$.
%
As the total number of convex sets can be exponential in $N$, e.g., all subsets of a complete subgraph are convex, we consider the basis consisting of all intervals: $I(u,v)$ for $u,v\in[N]$. This involves $\mathcal{O}(N^2)$ basis elements, each of size $\mathcal{O}(N)$. With a simple modification of the Floyd Warshall \citep{floyd1962algorithm} algorithm, computing the interval basis takes $\mathcal{O}(N^3)$ time complexity.
%In general one could consider any ${r}$-convex sets as bases; however the number of such sets can be as large as $\scO(n^{r})$, e.g., on the complete graph. An interesting alternative could be to take any basis set $\{B_1,\dots,B_N\}$ based on metric balls and \emph{convexify} it, meaning take the basis $\{\conv(B_1),\dots,\conv(B_N)\}$.

%As the total number of convex sets can be exponential in $N$, e.g., all subsets of a complete subgraph are convex, we consider two natural alternatives. First, the basis set consisting of all intervals: $I(u,v)$ for $u,v\in[N]$. %Second, the basis set consisting of all convex hulls of up to two nodes: $\conv(\{u,v\})$ for $u,v\in [N]$.
%There are two natural bases using graph convexity: the basis set of all intervals $I(u,v)$ and the basis set of all $2$-convex sets $\conv(\{u,v\})$, for $u,v\in V$. % Both basis sets are complete as $I(u,u)=\conv(\{u\})=\{u\}$ for all nodes $u$. 
%In general one could consider any ${r}$-convex sets as bases; however the number of such sets can be as large as $\scO(n^{r})$, e.g., on the complete graph. An interesting alternative could be to take any basis set $\{B_1,\dots,B_N\}$ based on metric balls and \emph{convexify} it, meaning take the basis $\{\conv(B_1),\dots,\conv(B_N)\}$.

\section{Additional experimental results} 
\label{apx:experiments}

%\SSS{@Alberto: can you please briefly mention your computing resources here? (I know this is nonsense)}: Resolved

We thoroughly explored various configurations for the three graphs described in our experimental setup in Section \ref{sec:experiments}. We run our experiments with an Intel Xeon Gold 6312U processor and 256 GB of RAM ECC 3200 MHz.
Figure \ref{fig:cliquetotal} displays different settings for the number of nodes in each clique and noise levels. 

As we compare the computational complexity of each basis in Section~\ref{sec:thedeialsofthebases} and the main results, the most intense computational load in the experiments will arise from the calculation of the basis, which can be seen as an initialization step in our algorithm. 
The proposed methods have varying computational complexities, and an arbitrarily complex function can be employed to compute the basis.
Remark that, in the usual complexity comparison among online learning algorithms using experts, we compare the complexity \textit{given} the experts.
Practically, we use pre-computed bases or even human experts.  
Also note that due to the expensive complexity of the $p$-balls and the convex sets seen in Section~\ref{sec:thedeialsofthebases}, we only conduct the LVC for LastFM Asia.

In Figure \ref{fig:gaussiantotal}, we present multiple settings for generating the Gaussian graph. Here the title of each plot is ``Foreground $x$,$y$; Background $x'$,$y'$; $k$-NN,'' which is explained as follows: $x$ represents the number of nodes in each foreground class, $x'$ represents the number of nodes in the background class, $y$ represents the standard deviation of the Gaussians generating the foreground class, $y'$ represents the standard deviation of the Gaussian generating the background class, and $k$ represents the number of nearest neighbors used to generate the graph.

In Figure \ref{fig:coratotal}, we present the various labels chosen as noise for the Cora graph. In Figure \ref{fig:coramain}, we presented the averages of all these different configurations. Here, we can see that the main behavior of the various bases is roughly maintained independently of the different labels chosen to be masked as background class. 

In Figure \ref{fig:lastfmtotal}, we present the various labels chosen as noise for the LastFM Asia graph. This graph comprises nodes representing LastFM users in Asian countries and edges representing mutual follower connections. Vertex features are extracted based on the artists liked by the users. During this initial analysis, we arbitrarily chose three out of eighteen possible labels to serve as the background class. In Figure \ref{fig:lastfmmain}, we presented the averages of all these different configurations. Varying the chosen background classes also produces different results, this is indeed due to the inherent lack of noise in the dataset. It is nice to see that regardless of the noise labels chosen, the behavior of our algorithm is always good, showing, as expected, that based on the amount of noise, we can just improve.

\clearpage

\begin{figure}[!t]
\centering
   \includegraphics[width=1.1\linewidth]{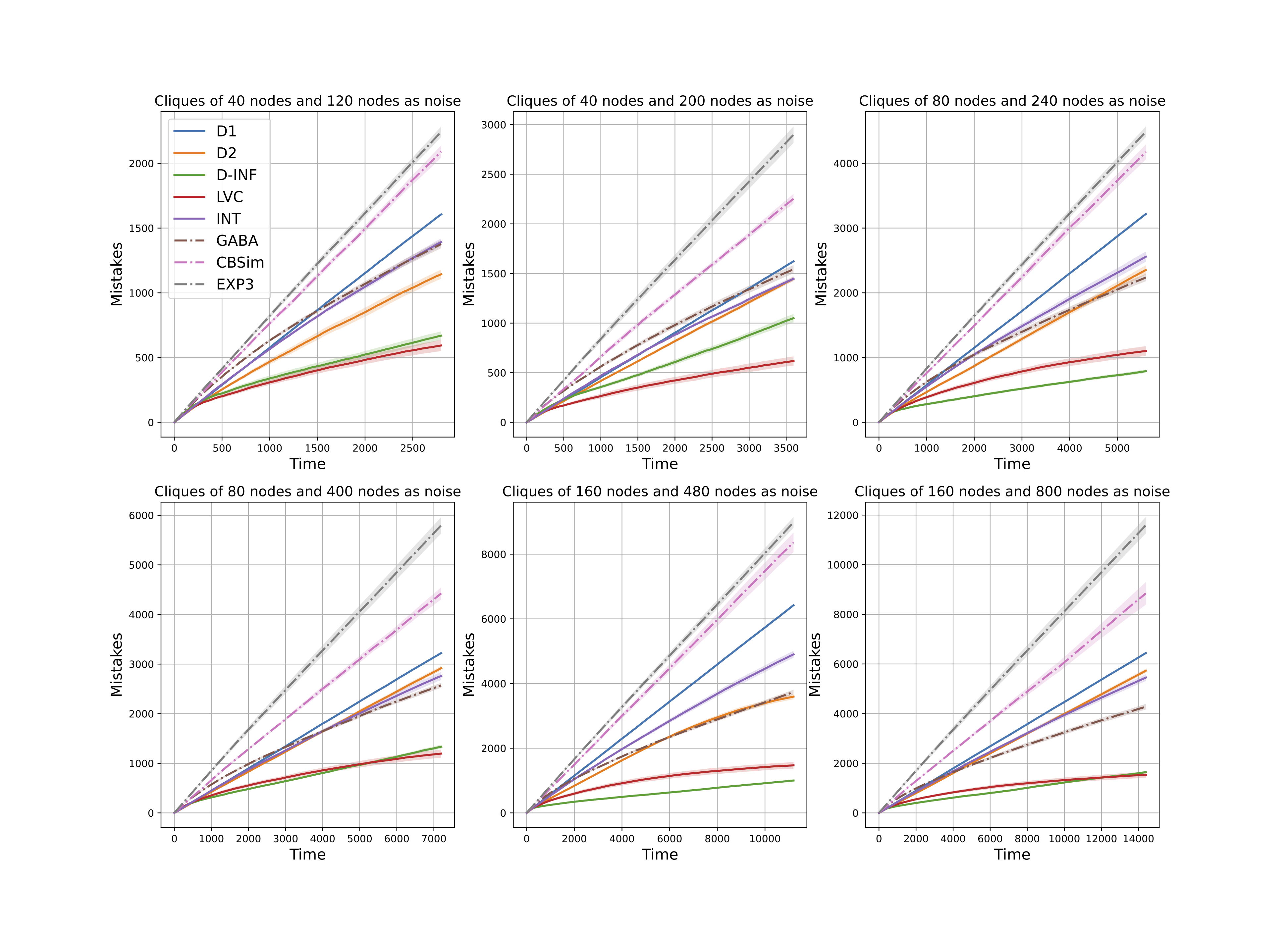}
   \caption{Stochastic Block Model results, dotted lines represent different baselines, while solid lines are used to represent various results.}
   \label{fig:cliquetotal}
\end{figure}

\clearpage

\begin{figure}[!t]
\centering
   \vspace{-2.1cm}
   \includegraphics[width=1.1\linewidth]{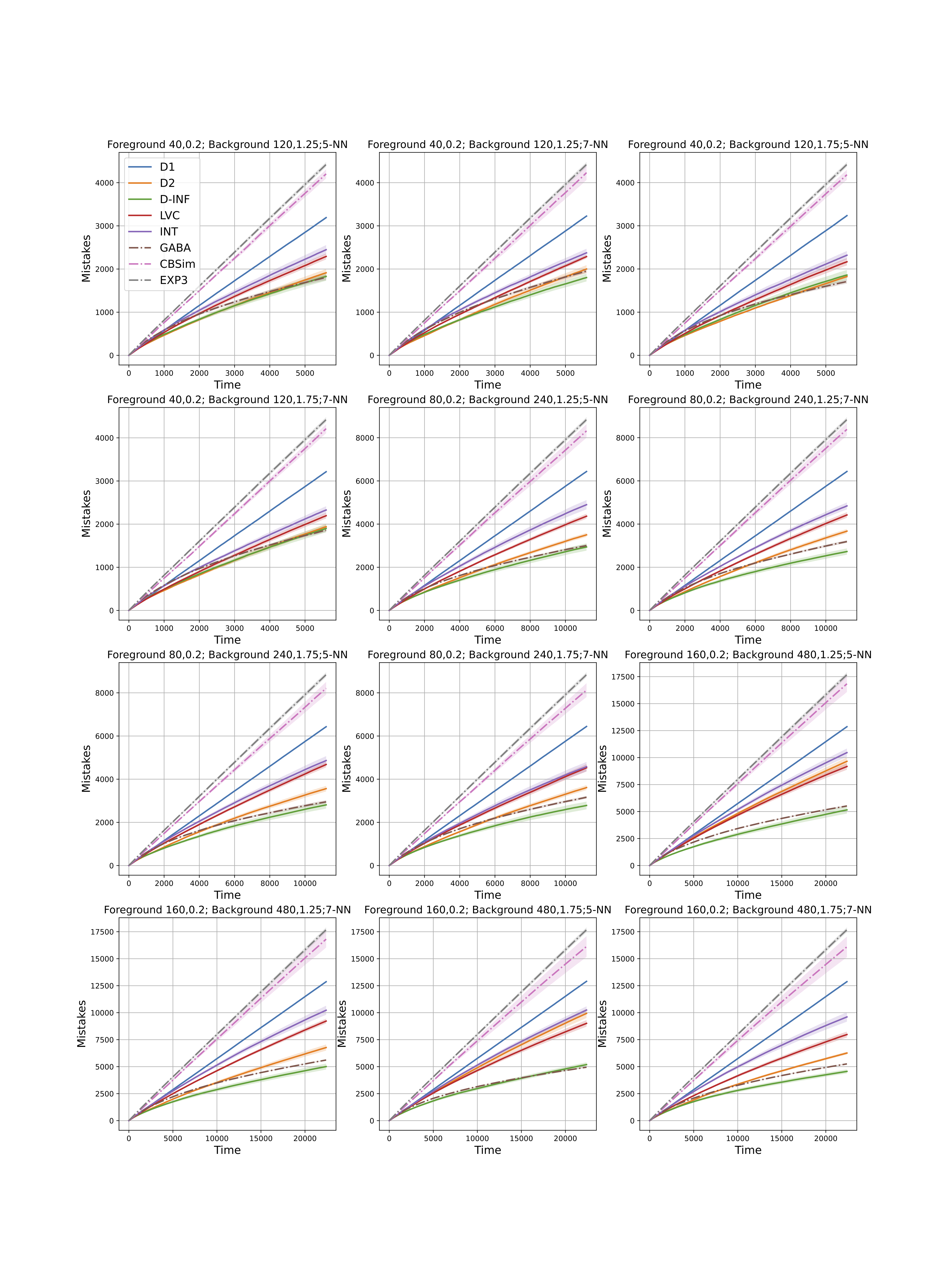}
   \vspace{-2.1cm}
   \caption{Gaussian graph results, dotted lines represent different baselines, while solid lines are used to represent various results.}
   \label{fig:gaussiantotal}
\end{figure}

\clearpage

\begin{figure}[!t]
\centering
   \includegraphics[width=1.1\linewidth]{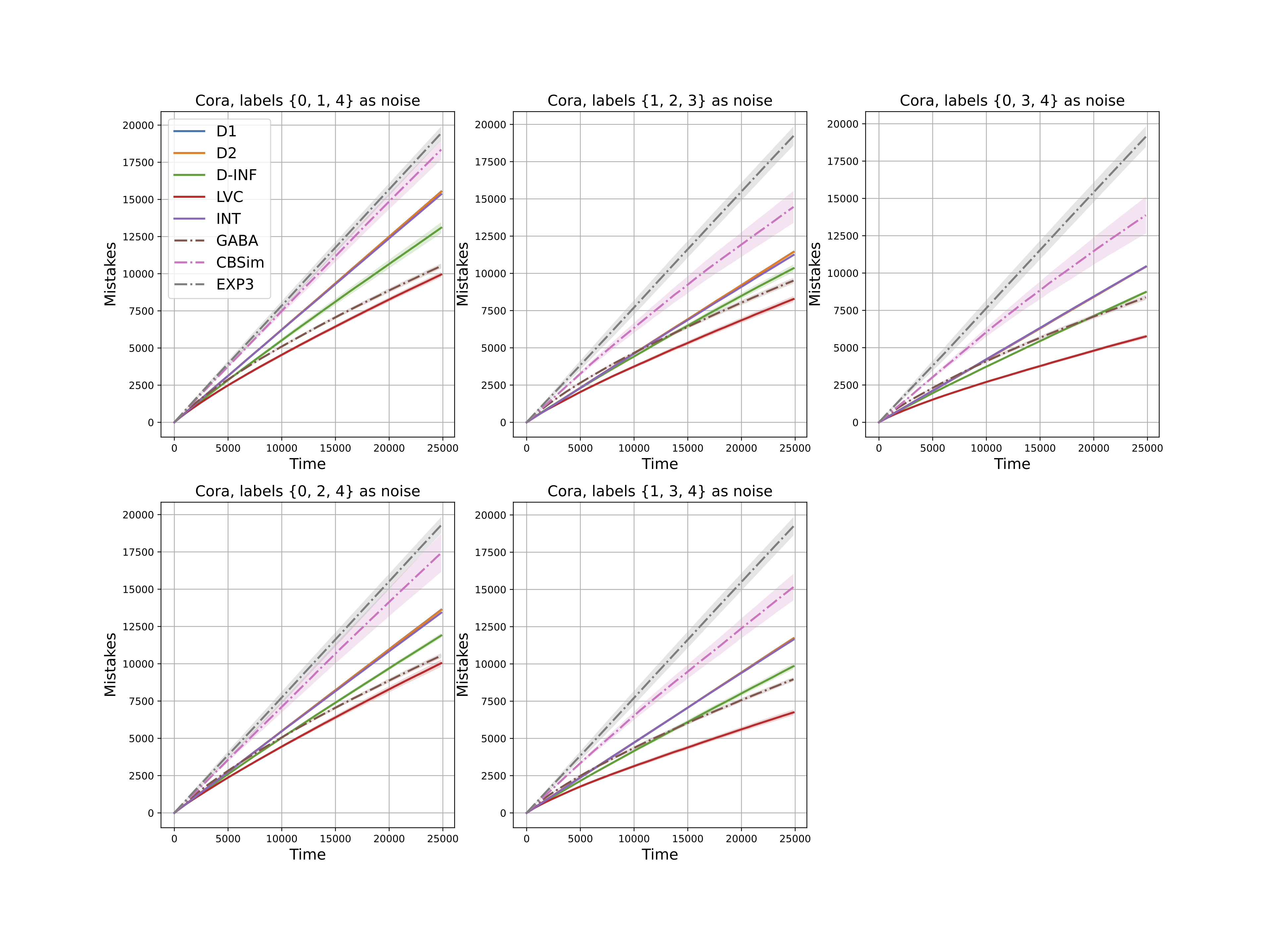}
   \caption{Cora results, dotted lines represent different baselines, while solid lines are used to represent various results}
   \label{fig:coratotal}
   \vspace{-2.1cm}
\end{figure}

\clearpage

\begin{figure}[!t]
\centering
   \includegraphics[width=1\linewidth]{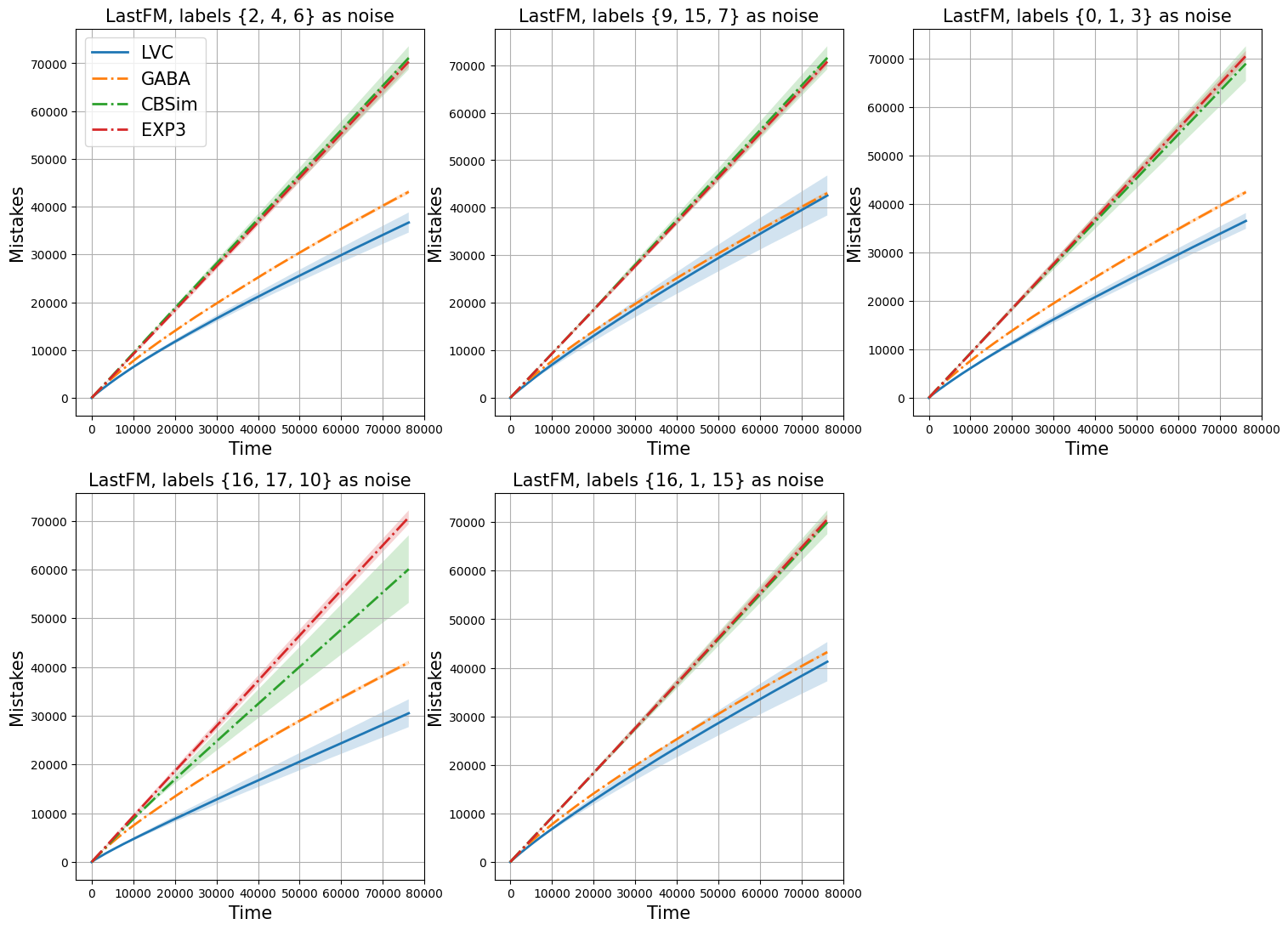}
   \caption{LastFM Asia results, dotted lines represent different baselines, while solid lines are used to represent various results}
   \label{fig:lastfmtotal}
\end{figure}

%%%%%%%%%%%%%%%%%%%%%%%%%%%%%%%%%%%%%%%%%%%%%%%%%%%%%%%%%%%%%%
%%%%%%%%%%%%%%%%%%%%%%%%%%%%%%%%%%%%%%%%%%%%%%%%%%%%%%%%%%%%%%
%%%%%%%%%%%%%%%%%%%%%%%%%%%%%%%%%%%%%%%%%%%%%%%%%%%%%%%%%%%%%%
%%%%%%%%%%%%%%%%%%%%%%%%%%%%%%%%%%%%%%%%%%%%%%%%%%%%%%%%%%%%%%
%%%%%%%%%%%%%%%%%%%%%%%%%%%%%%%%%%%%%%%%%%%%%%%%%%%%%%%%%%%%%%

\clearpage

\section*{Impact Statement}
%\SSS{more lines to come}
%This work is about a theoretical study on the confidence-rated bandits with abstention. 
Given the theoretical nature of our work, we cannot foresee the shape of positive or negative societal impacts which this work may have in future.

%\begin{comment}
\section*{NeurIPS Paper Checklist}
\begin{enumerate}

\item {\bf Claims}
    \item[] Question: Do the main claims made in the abstract and introduction accurately reflect the paper's contributions and scope?
    \item[] Answer: \answerYes{}% Replace by \answerYes{}, \answerNo{}, or \answerNA{}.
    \item[] Justification: All the claims are supported in the main body.
    \item[] Guidelines:
    \begin{itemize}
        \item The answer NA means that the abstract and introduction do not include the claims made in the paper.
        \item The abstract and/or introduction should clearly state the claims made, including the contributions made in the paper and important assumptions and limitations. A No or NA answer to this question will not be perceived well by the reviewers. 
        \item The claims made should match theoretical and experimental results, and reflect how much the results can be expected to generalize to other settings. 
        \item It is fine to include aspirational goals as motivation as long as it is clear that these goals are not attained by the paper. 
    \end{itemize}

\item {\bf Limitations}
    \item[] Question: Does the paper discuss the limitations of the work performed by the authors?
    \item[] Answer: \answerYes{} % Replace by \answerYes{}, \answerNo{}, or \answerNA{}.
    \item[] Justification: The limitations and future work are discussed in the introduction and in the experimental results analysis.
    \item[] Guidelines:
    \begin{itemize}
        \item The answer NA means that the paper has no limitation while the answer No means that the paper has limitations, but those are not discussed in the paper. 
        \item The authors are encouraged to create a separate "Limitations" section in their paper.
        \item The paper should point out any strong assumptions and how robust the results are to violations of these assumptions (e.g., independence assumptions, noiseless settings, model well-specification, asymptotic approximations only holding locally). The authors should reflect on how these assumptions might be violated in practice and what the implications would be.
        \item The authors should reflect on the scope of the claims made, e.g., if the approach was only tested on a few datasets or with a few runs. In general, empirical results often depend on implicit assumptions, which should be articulated.
        \item The authors should reflect on the factors that influence the performance of the approach. For example, a facial recognition algorithm may perform poorly when image resolution is low or images are taken in low lighting. Or a speech-to-text system might not be used reliably to provide closed captions for online lectures because it fails to handle technical jargon.
        \item The authors should discuss the computational efficiency of the proposed algorithms and how they scale with dataset size.
        \item If applicable, the authors should discuss possible limitations of their approach to address problems of privacy and fairness.
        \item While the authors might fear that complete honesty about limitations might be used by reviewers as grounds for rejection, a worse outcome might be that reviewers discover limitations that aren't acknowledged in the paper. The authors should use their best judgment and recognize that individual actions in favor of transparency play an important role in developing norms that preserve the integrity of the community. Reviewers will be specifically instructed to not penalize honesty concerning limitations.
    \end{itemize}

\item {\bf Theory Assumptions and Proofs}
    \item[] Question: For each theoretical result, does the paper provide the full set of assumptions and a complete (and correct) proof?
    \item[] Answer: \answerYes{} % Replace by \answerYes{}, \answerNo{}, or \answerNA{}.
    \item[] Justification: We explicitly write the assumptions of all the theoretical claims.
    %We clearly state the assumptions made for any result. Additionally, in the experimental analysis for the example we proposed, we emphasize the importance of an inductive bias that reflects the reality as we discuss.
    \item[] Guidelines:
    \begin{itemize}
        \item The answer NA means that the paper does not include theoretical results. 
        \item All the theorems, formulas, and proofs in the paper should be numbered and cross-referenced.
        \item All assumptions should be clearly stated or referenced in the statement of any theorems.
        \item The proofs can either appear in the main paper or the supplemental material, but if they appear in the supplemental material, the authors are encouraged to provide a short proof sketch to provide intuition. 
        \item Inversely, any informal proof provided in the core of the paper should be complemented by formal proofs provided in appendix or supplemental material.
        \item Theorems and Lemmas that the proof relies upon should be properly referenced. 
    \end{itemize}

    \item {\bf Experimental Result Reproducibility}
    \item[] Question: Does the paper fully disclose all the information needed to reproduce the main experimental results of the paper to the extent that it affects the main claims and/or conclusions of the paper (regardless of whether the code and data are provided or not)?
    \item[] Answer: \answerYes{} % \answerTODO{} % Replace by \answerYes{}, \answerNo{}, or \answerNA{}.
    \item[] Justification: We provided the experimental codes.
    \item[] Guidelines:
    \begin{itemize}
        \item The answer NA means that the paper does not include experiments.
        \item If the paper includes experiments, a No answer to this question will not be perceived well by the reviewers: Making the paper reproducible is important, regardless of whether the code and data are provided or not.
        \item If the contribution is a dataset and/or model, the authors should describe the steps taken to make their results reproducible or verifiable. 
        \item Depending on the contribution, reproducibility can be accomplished in various ways. For example, if the contribution is a novel architecture, describing the architecture fully might suffice, or if the contribution is a specific model and empirical evaluation, it may be necessary to either make it possible for others to replicate the model with the same dataset, or provide access to the model. In general. releasing code and data is often one good way to accomplish this, but reproducibility can also be provided via detailed instructions for how to replicate the results, access to a hosted model (e.g., in the case of a large language model), releasing of a model checkpoint, or other means that are appropriate to the research performed.
        \item While NeurIPS does not require releasing code, the conference does require all submissions to provide some reasonable avenue for reproducibility, which may depend on the nature of the contribution. For example
        \begin{enumerate}
            \item If the contribution is primarily a new algorithm, the paper should make it clear how to reproduce that algorithm.
            \item If the contribution is primarily a new model architecture, the paper should describe the architecture clearly and fully.
            \item If the contribution is a new model (e.g., a large language model), then there should either be a way to access this model for reproducing the results or a way to reproduce the model (e.g., with an open-source dataset or instructions for how to construct the dataset).
            \item We recognize that reproducibility may be tricky in some cases, in which case authors are welcome to describe the particular way they provide for reproducibility. In the case of closed-source models, it may be that access to the model is limited in some way (e.g., to registered users), but it should be possible for other researchers to have some path to reproducing or verifying the results.
        \end{enumerate}
    \end{itemize}

\item {\bf Open access to data and code}
    \item[] Question: Does the paper provide open access to the data and code, with sufficient instructions to faithfully reproduce the main experimental results, as described in supplemental material?
    \item[] Answer: \answerYes{} % Replace by \answerYes{}, \answerNo{}, or \answerNA{}.
    \item[] Justification: We cited the datasets which we use in the experiments. Also, these datasets are publicly available and widely used in the community.
\item[] Guidelines:
    \begin{itemize}
        \item The answer NA means that paper does not include experiments requiring code.
        \item Please see the NeurIPS code and data submission guidelines (\url{https://nips.cc/public/guides/CodeSubmissionPolicy}) for more details.
        \item While we encourage the release of code and data, we understand that this might not be possible, so “No” is an acceptable answer. Papers cannot be rejected simply for not including code, unless this is central to the contribution (e.g., for a new open-source benchmark).
        \item The instructions should contain the exact command and environment needed to run to reproduce the results. See the NeurIPS code and data submission guidelines (\url{https://nips.cc/public/guides/CodeSubmissionPolicy}) for more details.
        \item The authors should provide instructions on data access and preparation, including how to access the raw data, preprocessed data, intermediate data, and generated data, etc.
        \item The authors should provide scripts to reproduce all experimental results for the new proposed method and baselines. If only a subset of experiments are reproducible, they should state which ones are omitted from the script and why.
        \item At submission time, to preserve anonymity, the authors should release anonymized versions (if applicable).
        \item Providing as much information as possible in supplemental material (appended to the paper) is recommended, but including URLs to data and code is permitted.
    \end{itemize}

\item {\bf Experimental Setting/Details}
    \item[] Question: Does the paper specify all the training and test details (e.g., data splits, hyperparameters, how they were chosen, type of optimizer, etc.) necessary to understand the results?
    \item[] Answer: \answerYes{} % Replace by \answerYes{}, \answerNo{}, or \answerNA{}.
    \item[] Justification: We provided the details in the main body as well as in the Appendix.
     \item[] Guidelines:
    \begin{itemize}
        \item The answer NA means that the paper does not include experiments.
        \item The experimental setting should be presented in the core of the paper to a level of detail that is necessary to appreciate the results and make sense of them.
        \item The full details can be provided either with the code, in appendix, or as supplemental material.
    \end{itemize}

\item {\bf Experiment Statistical Significance}
    \item[] Question: Does the paper report error bars suitably and correctly defined or other appropriate information about the statistical significance of the experiments?
    \item[] Answer: \answerYes{} % Replace by \answerYes{}, \answerNo{}, or \answerNA{}.
    \item[] Justification: We provided error bars and did statistical tests in the main body.
      \item[] Guidelines:
    \begin{itemize}
        \item The answer NA means that the paper does not include experiments.
        \item The authors should answer "Yes" if the results are accompanied by error bars, confidence intervals, or statistical significance tests, at least for the experiments that support the main claims of the paper.
        \item The factors of variability that the error bars are capturing should be clearly stated (for example, train/test split, initialization, random drawing of some parameter, or overall run with given experimental conditions).
        \item The method for calculating the error bars should be explained (closed form formula, call to a library function, bootstrap, etc.)
        \item The assumptions made should be given (e.g., Normally distributed errors).
        \item It should be clear whether the error bar is the standard deviation or the standard error of the mean.
        \item It is OK to report 1-sigma error bars, but one should state it. The authors should preferably report a 2-sigma error bar than state that they have a 96\% CI, if the hypothesis of Normality of errors is not verified.
        \item For asymmetric distributions, the authors should be careful not to show in tables or figures symmetric error bars that would yield results that are out of range (e.g. negative error rates).
        \item If error bars are reported in tables or plots, The authors should explain in the text how they were calculated and reference the corresponding figures or tables in the text.
    \end{itemize}

\item {\bf Experiments Compute Resources}
    \item[] Question: For each experiment, does the paper provide sufficient information on the computer resources (type of compute workers, memory, time of execution) needed to reproduce the experiments?
    \item[] Answer: \answerYes{} % Replace by \answerYes{}, \answerNo{}, or \answerNA{}.
    \item[] Justification: We explicitly write the computing resources we used in the experiments in the Appendix.
     \item[] Guidelines:
    \begin{itemize}
        \item The answer NA means that the paper does not include experiments.
        \item The paper should indicate the type of compute workers CPU or GPU, internal cluster, or cloud provider, including relevant memory and storage.
        \item The paper should provide the amount of compute required for each of the individual experimental runs as well as estimate the total compute. 
        \item The paper should disclose whether the full research project required more compute than the experiments reported in the paper (e.g., preliminary or failed experiments that didn't make it into the paper). 
    \end{itemize}
    
\item {\bf Code Of Ethics}
    \item[] Question: Does the research conducted in the paper conform, in every respect, with the NeurIPS Code of Ethics \url{https://neurips.cc/public/EthicsGuidelines}?
    \item[] Answer: \answerYes{} % Replace by \answerYes{}, \answerNo{}, or \answerNA{}.
    \item[] Justification: We have read and followed NeurIPS Code of Ethics.
    \item[] Guidelines:
    \begin{itemize}
        \item The answer NA means that the authors have not reviewed the NeurIPS Code of Ethics.
        \item If the authors answer No, they should explain the special circumstances that require a deviation from the Code of Ethics.
        \item The authors should make sure to preserve anonymity (e.g., if there is a special consideration due to laws or regulations in their jurisdiction).
    \end{itemize}

\item {\bf Broader Impacts}
    \item[] Question: Does the paper discuss both potential positive societal impacts and negative societal impacts of the work performed?
    \item[] Answer: \answerYes{} % Replace by \answerYes{}, \answerNo{}, or \answerNA{}.
    \item[] Justification: We provided the dedicated section for this.
    \item[] Guidelines:
    \begin{itemize}
        \item The answer NA means that there is no societal impact of the work performed.
        \item If the authors answer NA or No, they should explain why their work has no societal impact or why the paper does not address societal impact.
        \item Examples of negative societal impacts include potential malicious or unintended uses (e.g., disinformation, generating fake profiles, surveillance), fairness considerations (e.g., deployment of technologies that could make decisions that unfairly impact specific groups), privacy considerations, and security considerations.
        \item The conference expects that many papers will be foundational research and not tied to particular applications, let alone deployments. However, if there is a direct path to any negative applications, the authors should point it out. For example, it is legitimate to point out that an improvement in the quality of generative models could be used to generate deepfakes for disinformation. On the other hand, it is not needed to point out that a generic algorithm for optimizing neural networks could enable people to train models that generate Deepfakes faster.
        \item The authors should consider possible harms that could arise when the technology is being used as intended and functioning correctly, harms that could arise when the technology is being used as intended but gives incorrect results, and harms following from (intentional or unintentional) misuse of the technology.
        \item If there are negative societal impacts, the authors could also discuss possible mitigation strategies (e.g., gated release of models, providing defenses in addition to attacks, mechanisms for monitoring misuse, mechanisms to monitor how a system learns from feedback over time, improving the efficiency and accessibility of ML).
    \end{itemize}
    
\item {\bf Safeguards}
    \item[] Question: Does the paper describe safeguards that have been put in place for responsible release of data or models that have a high risk for misuse (e.g., pretrained language models, image generators, or scraped datasets)?
    \item[] Answer: \answerNA{} % Replace by \answerYes{}, \answerNo{}, or \answerNA{}.
    \item[] Justification: The paper poses no such risks. 
    \item[] Guidelines:
    \begin{itemize}
        \item The answer NA means that the paper poses no such risks.
        \item Released models that have a high risk for misuse or dual-use should be released with necessary safeguards to allow for controlled use of the model, for example by requiring that users adhere to usage guidelines or restrictions to access the model or implementing safety filters. 
        \item Datasets that have been scraped from the Internet could pose safety risks. The authors should describe how they avoided releasing unsafe images.
        \item We recognize that providing effective safeguards is challenging, and many papers do not require this, but we encourage authors to take this into account and make a best faith effort.
    \end{itemize}

\item {\bf Licenses for existing assets}
    \item[] Question: Are the creators or original owners of assets (e.g., code, data, models), used in the paper, properly credited and are the license and terms of use explicitly mentioned and properly respected?
    \item[] Answer: \answerYes{} % Replace by \answerYes{}, \answerNo{}, or \answerNA{}.
    \item[] Justification: We used the datasets that are widely used in the community.
    \item[] Guidelines:
    \begin{itemize}
        \item The answer NA means that the paper does not use existing assets.
        \item The authors should cite the original paper that produced the code package or dataset.
        \item The authors should state which version of the asset is used and, if possible, include a URL.
        \item The name of the license (e.g., CC-BY 4.0) should be included for each asset.
        \item For scraped data from a particular source (e.g., website), the copyright and terms of service of that source should be provided.
        \item If assets are released, the license, copyright information, and terms of use in the package should be provided. For popular datasets, \url{paperswithcode.com/datasets} has curated licenses for some datasets. Their licensing guide can help determine the license of a dataset.
        \item For existing datasets that are re-packaged, both the original license and the license of the derived asset (if it has changed) should be provided.
        \item If this information is not available online, the authors are encouraged to reach out to the asset's creators.
    \end{itemize}

\item {\bf New Assets}
    \item[] Question: Are new assets introduced in the paper well documented and is the documentation provided alongside the assets?
    \item[] Answer: \answerNA{} % Replace by \answerYes{}, \answerNo{}, or \answerNA{}.
    \item[] Justification: The paper does not release new assets. 
    \item[] Guidelines:
    \begin{itemize}
        \item The answer NA means that the paper does not release new assets.
        \item Researchers should communicate the details of the dataset/code/model as part of their submissions via structured templates. This includes details about training, license, limitations, etc. 
        \item The paper should discuss whether and how consent was obtained from people whose asset is used.
        \item At submission time, remember to anonymize your assets (if applicable). You can either create an anonymized URL or include an anonymized zip file.
    \end{itemize}

\item {\bf Crowdsourcing and Research with Human Subjects}
    \item[] Question: For crowdsourcing experiments and research with human subjects, does the paper include the full text of instructions given to participants and screenshots, if applicable, as well as details about compensation (if any)? 
    \item[] Answer: \answerNA{} % Replace by \answerYes{}, \answerNo{}, or \answerNA{}.
    \item[] Justification: The paper does not involve crowdsourcing nor research with human subjects. 
     \item[] Guidelines:
    \begin{itemize}
        \item The answer NA means that the paper does not involve crowdsourcing nor research with human subjects.
        \item Including this information in the supplemental material is fine, but if the main contribution of the paper involves human subjects, then as much detail as possible should be included in the main paper. 
        \item According to the NeurIPS Code of Ethics, workers involved in data collection, curation, or other labor should be paid at least the minimum wage in the country of the data collector. 
    \end{itemize}

\item {\bf Institutional Review Board (IRB) Approvals or Equivalent for Research with Human Subjects}
    \item[] Question: Does the paper describe potential risks incurred by study participants, whether such risks were disclosed to the subjects, and whether Institutional Review Board (IRB) approvals (or an equivalent approval/review based on the requirements of your country or institution) were obtained?
    \item[] Answer: \answerNA{} % Replace by \answerYes{}, \answerNo{}, or \answerNA{}.
    \item[] Justification: The paper does not involve crowdsourcing nor research with human subjects.
    \item[] Guidelines:
    \begin{itemize}
        \item The answer NA means that the paper does not involve crowdsourcing nor research with human subjects.
        \item Depending on the country in which research is conducted, IRB approval (or equivalent) may be required for any human subjects research. If you obtained IRB approval, you should clearly state this in the paper. 
        \item We recognize that the procedures for this may vary significantly between institutions and locations, and we expect authors to adhere to the NeurIPS Code of Ethics and the guidelines for their institution. 
        \item For initial submissions, do not include any information that would break anonymity (if applicable), such as the institution conducting the review.
    \end{itemize}

\end{enumerate}
    
%\end{comment}

\end{document}